\newcommand{\cmark}{\ding{51}}%
\newcommand{\xmark}{\ding{55}}%
\newtheorem*{rep@theorem}{\rep@title}
\newcommand{\newreptheorem}[2]{%
\newenvironment{rep#1}[1]{%
 \def\rep@title{#2 \ref{##1}}%
 \begin{rep@theorem}}%
 {\end{rep@theorem}}}
\newtheorem*{rep@corollary}{\rep@title}
\newcommand{\newrepcorollary}[2]{%
\newenvironment{rep#1}[1]{%
 \def\rep@title{#2 \ref{##1}}%
 \begin{rep@corollary}}%
 {\end{rep@corollary}}}
\newtheorem*{rep@proposition}{\rep@title}
\newcommand{\newrepproposition}[2]{%
\newenvironment{rep#1}[1]{%
 \def\rep@title{#2 \ref{##1}}%
 \begin{rep@proposition}}%
 {\end{rep@proposition}}}
\newenvironment{psketch}{%
  \proof
}{\endproof}
\theoremstyle{definition}
\newtheorem{definition}{Definition}[section]
\theoremstyle{remark}
\newtheorem*{remark}{Remark}
\theoremstyle{plain}
\newtheorem{theorem}{Theorem}[section]
\newtheorem{proposition}[theorem]{Proposition}
\newtheorem{lemma}[theorem]{Lemma}
\newtheorem{corollary}{Corollary}[theorem]
\newcommand{\NEEDCITE}[1]{{\color{red} [CITE]}}
\DeclareMathOperator{\argmax}{arg\,max}
\let\Re\relax
\DeclareMathOperator{\Re}{\mathbb{R}}
\title{Counterfactual Explanations for\\Arbitrary Regression Models}
\author{%
  Thomas Spooner\\
  J.\ P.\ Morgan AI Research\\
  \texttt{thomas.spooner@jpmorgan.com} \\
  \And
  Danial Dervovic\\
  J.\ P.\ Morgan AI Research\\
  \texttt{danial.dervovic@jpmorgan.com} \\
  \And
  Jason Long\\
  J.\ P.\ Morgan AI Research\\
  \texttt{jason.x.long@jpmorgan.com} \\
  \And
  Jon Shepard\\
  J.\ P.\ Morgan AI Research\\
  \texttt{jon.shepard@jpmorgan.com} \\
  \AND
  Jiahao Chen\\
  J.\ P.\ Morgan AI Research\\
  \texttt{jiahao.chen@jpmorgan.com} \\
  \And
  Daniele Magazzeni\\
  J.\ P.\ Morgan AI Research\\
  \texttt{daniele.magazzeni@jpmorgan.com} \\
}
\begin{document}

\maketitle


\begin{abstract}
We present a new method for counterfactual explanations (CFEs)
based on Bayesian optimisation
that applies to both classification and regression models.
Our method is a globally convergent search
algorithm with support for arbitrary regression models and constraints like
feature sparsity and actionable recourse,
and furthermore can answer multiple counterfactual questions in parallel
while learning from previous queries.
We formulate CFE search for regression models in a rigorous mathematical framework
using differentiable potentials, which resolves 
robustness issues in threshold-based objectives.
We prove that in this framework,
\begin{enumerate*}[label=(\alph*)]
\item verifying the existence of counterfactuals is
    \textsf{NP}-complete; and
\item that finding instances using such potentials is
    \textsf{CLS}-complete.
\end{enumerate*}
We describe a unified algorithm for CFEs using a specialised acquisition function
that composes both expected improvement and an exponential-polynomial (EP)
family with desirable properties.
Our evaluation on real-world benchmark domains
demonstrate high sample-efficiency and precision.
\end{abstract}
















\section{Introduction}\label{sec:intro}
Counterfactual explanations (CFEs) have garnered attention in the explainable AI (XAI) literature~\citep{Miller2019} as a tool for inspecting the outputs of machine learning models \citep{Verma2020,Confalonieri2020,Miller2019}.
In its most basic form, a CFE of a model $f$ takes an input \emph{query instance} $q$ belonging to the input data space and applies a minimal perturbation $\varepsilon$ such that the new model output, $f(q + \varepsilon)$, differs from the original output, $f(q)$, in some desired way.
The ability to inspect local changes in a model's output allows one to understand a decision boundary in more detail, diagnose issues with robustness or to suggest ways consumers of a model's output can improve their own model-dependent outcomes.
The latter idea is known as \emph{actionable recourse} \citep{Karimi2021, Ustun2019}, and is important for high-stakes decisions such as credit decisioning and healthcare. The European Union now legally stipulates that any individual subject to (semi)-autonomous decision making has the \emph{right} to an explanation~\citep{voigt2017eu,Doshi-Velez2017}, and similar rights to explanation have existed in the US in domains such as credit decisioning \citep{Chen2018}.

Existing work on CFEs focus on classification models~\citep{Barocas2020}.
Since the model outputs are discrete, the notion of when a model's output changes is unambiguous.
In contrast, regression models have continuous outputs which
can change under arbitrarily small perturbations.
Applying the usual notion of counterfactuals (CFs) to explain changes to regression models
is hence prone to CFEs that are essentially uninformative 
by dint of their being too similar to the query instance.
The na\"ive solution to this problem, by requiring 
a minimal threshold of change on the dependent variables,
turns out to be
very sensitive to the chosen threshold, as we describe in \autoref{sec:cfx_search}.
This phenomenon suggests that there are questions around counterfactual specification that have yet to be addressed.

\paragraph{Our contributions.}
We present a new method for computing CFEs
for regression models that is based on Bayesian optimisation.
Our method is principled, yet flexible:
it requires only black box access to the model,
and supports arbitrary nonlinear
constraints on feature perturbations. Our specific contributions are to:
\begin{enumerate}
    \item Introduce a rigorous mathematical framework for specifying
        counterfactual search problems with regression models based on
        differentiable potentials and, in so doing, resolve the known
        robustness issues of threshold-based objectives.
    \item Prove that, under this formalism, finding an optimal CF is \textsf{CLS}-complete~\citep{fearnley:2021:complexity}, and that
        deciding CF existence more generally is \textsf{NP}-complete.
    \item Motivate the definition of an exponential-polynomial (EP) family of
        potentials, explore their theoretical properties, and
        demonstrate their effectiveness on practical problems.
    \item Provide a unified algorithm for generating counterfactuals with
        Bayesian optimisation and a specialised acquisition function based on a
        composition of expected improvement and the EP family of potential
        functions.
\end{enumerate}
To the best of our knowledge, we are the first to use Bayesian optimisation to generate CFEs.

\subsection{Related Work}\label{sec:related_work}

Several comprehensive surveys on CFEs have recently been published
~\citep{Byrne2016,Verma2020,Stepin2021}.
CF generation techniques vary by how much we can introspect into the model.
On one extreme, some methods use only black box access to model outputs, and are model agnostic~\citep{Dandl2021,FACE}.
On the other, methods exist that require full introspection into the model's specification,
notably for tree ensembles~\citep{Tolomei2017,lucic2020focus,FERNANDEZ2020196}.
Such methods are specific to a particular class of model.
Intermediate between these are methods that require gradient information,
which presumes that the model is differentiable~\citep{Pawelczyk2020b,DiCE,vanlooveren2020interpretable}.

\paragraph{Computational cost.}
In general, CFE methods that require fewer assumptions about the model have higher computational complexity.
The CF search is often formulated as a non-convex optimisation problem, although some convexified formulations do exist \citep{Artelt2019,Hammer2020},
and are computationally hard to solve optimally~\citep{MANDERS1978168,Jain_2017}.
Despite this cost, we use Bayesian optimisation (BO) for our CF search, as it is globally convergent~\citep{pmlr-v9-grunewalder10a,Srinivas_2012,JMLR:v12:bull11a} and is efficient in practice~\citep{Snoek2012,hyperopt}.
To amortize the cost incurred in CFE search,
some methods provide multiple diverse CFEs to a given query instance~\citep{DiCE,Dandl2021,FERNANDEZ2020196},
while others permit reuse of a single optimisation run to generate CFEs for multiple query instances~\citep{mahajan2020preserving}.
Our method is an example of the latter.

\paragraph{Desiderata.}
\textbf{Sparsity.} Human interpretable CFEs should involve changes to only a few features,
and is usually enforced by $\ell_1$-regularisation~\citep{lucic2020focus,FERNANDEZ2020196,Karimi2021,DiCE} or $\ell_0$-regularisation~\citep{Dandl2021}.
However, such sparsity penalties are not always appropriate, particularly when handling predictive multiplicity, when multiple trained classifiers all have similar output~\citep{Wang2019,Pawelczyk2020}.
In our method, sparsity can be induced by any $\ell_p$-regulariser as specified by the user.
\textbf{Proximity.}
Generated CFEs should be close to the manifold of observed data~\citep{Dandl2021,Pawelczyk2020b,ijcai2020-395}.
\textbf{Causality.}
CFEs should account for causal relations between features~\citep{Bottou2013,Karimi2021,mahajan2020preserving},
assuming the underlying causal structure is known~\citep{causality-pearl}.
\textbf{Actionability (recourse).}
The changes described in a CFE can be realized in a future input to the model,
which requires methods to distinguish between mutable and immutable features~\citep{ijcai2020-395,Dandl2021,Karimi2021}.
Our method permits nonlinear constraints, which allows the user to impose their own actionability properties; for instance, that age can only increase,
or that two features must change in the same direction.
\textbf{Use of categorical features.}
Categorical features must be processed into a differentiable representation before they can be used in gradient-based methods.
Methods like the standard Gumbel-softmax trick \citep{Jang2017,Maddison2017} presents scaling issues when many categories are present~\citep{wachter2018a}.
Some methods like MiVaBo~\citep{ijcai2020-365} and CoCaBO~\citep{pmlr-v119-ru20a} permit Bayesian optimisation over mixed variables,
while others use latent variables in the surrogate model \citep{Zhang2020}.
In principle, our method permits the use of any such representation of categorical variables.
In this paper, we focus on methods available in GPyOpt~\citep{gpyopt2016}, upon which we implement our algorithm.

\begin{table*}[t]
\scriptsize
    \centering
    \caption{Key properties of our counterfactual generation algorithm.}
    \label{tab:categorise-method}
    \begin{tabular}{crccc}
        \toprule\toprule
        & \textbf{Property} & \textbf{Description} & \textbf{Value} \\
        \midrule
        \emph{Assumptions} & Model Access & How much information is needed about the model? & Black Box \\
        & Model Agnostic & Is the algorithm model-independent? &  \cmark \\
        \midrule
        \emph{Optimisation} & Amortised Inference & CF generation for multiple query instances without optimising separately &  \xmark  \\
         \emph{Amortisation} & Multiple Counterfactuals & Multiple CF examples produced for a given query instance &  \cmark \\
        \midrule
        & Sparsity & Does the algorithm consider sparsity? & \cmark \\
        \emph{CF Attributes}  & Data Manifold & Are generated examples forced to be close to the data manifold? & \xmark \\
        & Causal relation & Are causal relations between features considered? & \xmark \\
        \midrule
        \emph{CF Optimisation} & Feature preference & Is feature  actionability accounted for?& \cmark \\
        \emph{Attributes} & Categorical dist. func. & Distance function for categorical features if different to continuous. & $-$ \\
        \bottomrule\bottomrule
    \end{tabular}
\end{table*}

\autoref{tab:categorise-method} describes our method using the categorisation of~\citet{Verma2020},
while noting that this categorisation excludes regressions.
At present, the only other CFE method that is black-box, model-agnostic and capable of producing multiple counterfactual example is \citet{Dandl2021}.
Like us, they pose CFE as an optimisation problem.
Also, the method applies to both regression and classification models, albeit focusing on the latter.
Our method differs in two key aspects. First, we use \textit{counterfactual potentials}
instead of thresholding, which we argue in \autoref{sec:cfx_search} is not appropriate in general.
Second, we use Bayesian optimisation, which guarantees global convergences as in \autoref{thm:ei_cfx},
instead of genetic programming, which does not provide such convergence rigorously.

\section{Counterfactual Search and its Complexity}\label{sec:cfx_search}
The generation of CFEs can be formulated as a classic
search, or satisfiability problem. Given some model $f : \mathcal{X} \to
\mathcal{Y}$, where $\mathcal{X}$ is of finite dimensionality --- but possibly infinite in cardinality --- and a query
instance $q \in \mathcal{X}$, the objective is, broadly speaking, to identify
another input $c \ne q$, $c \in\mathcal{X}$, such that the new output qualifies
as ``contrary to the fact''. To formalise this, we first introduce the notion
of a \emph{counterfactual algebra} which contains all possible subsets of a
model's range, excluding $f\!\left(q\right)$. We then identify a form of
duality between the chosen target set and the input values that could plausibly
give rise to an output in this region. These two concepts characterise the
problem of finding a CFE, regardless of the domain or
codomain of the model.

\begin{definition}[Counterfactual Algebra]\label{def:cfx_algebra}
    For a model-query pair $\left(f, q\right)$, define the \emph{counterfactual
    algebra} as the powerset
    \begin{equation}
        \mathbb{T}^f_q \doteq \mathcal{P}\!\left(\left\{f\!\left(x\right) : x \in \mathcal{X},\, f\!\left(x\right) \ne f\!\left(q\right)\right\}\right).
    \end{equation}
    Further,
    let $\widetilde{\mathbb{T}}_q^f \subset \mathbb{T}_q^f$ denote the set of subsets
    that admit polynomial-time membership circuits/oracles (see
    \autoref{def:mem_circuit}).
\end{definition}




\begin{definition}[Counterfactual Duality]\label{def:cfx_duality}
    Take a model-query pair $\left(f, q\right)$ and choose a \emph{target set}
    $\mathcal{T} \in \mathbb{T}^f_q$ to be the \emph{dual space}. The
    corresponding \emph{primal} (or, \emph{counterfactual}) space is then
    defined as the preimage of the target set under $f$,
    \begin{equation}
        \mathcal{S}^f_{\mathcal{T}} \doteq f^{-1}\!\left[\mathcal{T}\right] =
        \left\{ x \in \mathcal{X} : f\!\left(x\right) \in \mathcal{T} \right\}.
    \end{equation}
    The abbreviated notations $\mathcal{S}_\mathcal{T}$ and $\mathcal{S}$ will
    be used when $f$ and/or $\mathcal{T}$ are clear from context.
\end{definition}

The construction above makes it clear that the outcome and efficacy
of any counterfactual experiment hinges on
\begin{enumerate*}[label=(\alph*)]
    \item the nature of the domain $\mathcal{X}$ and codomain $\mathcal{Y}$;
    \item the query instance $q\in\mathcal{X}$; and
    \item the choice of target set $\mathcal{T}\in\mathbb{T}_q^f$.
\end{enumerate*}
While \autoref{def:cfx_algebra} implies that any element $x \in\mathcal{T}$ is
reachable under the model, it says nothing of how difficult it is to find such a
point. To provide intuition into this property of CF search, we define a new computational problem \textsc{CFX-Existence} below and prove in \autoref{thm:np_complete} that it is \textsf{NP}-complete. In other words, while finding a solution may be hard,
verifying CFs for a given query instance is relatively easy.

\begin{tcolorbox}[title=\textsc{CFX-Existence} (informal)]
    \textbf{Input:} A model $f : \mathcal{X} \to \mathcal{Y}$, query
    $q\in\mathcal{X}$ and target set
    $\mathcal{T}\in\widetilde{\mathbb{T}}_q^f$.
    \\[0.5em]
    \textbf{Goal:} Decide if the counterfactual set $\mathcal{S}$ is non-empty.
    \tcblower
    \begin{theorem}\label{thm:np_complete}
        \textsc{CFX-Existence} is \textsf{NP}-complete.
    \end{theorem}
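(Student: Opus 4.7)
The plan is to follow the standard NP-completeness template: exhibit a polynomial-time verifier to place the problem in NP, then reduce an NP-complete problem (I will use 3-SAT) to CFX-Existence for the hardness direction.

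For NP-membership, the natural witness is a candidate counterfactual $c \in \mathcal{X}$, whose encoding is polynomial in the input size. The verifier computes $f(c)$ in polynomial time (using the polynomial-size model circuit), consults the polynomial-time membership oracle associated with $\mathcal{T} \in \widetilde{\mathbb{T}}_q^f$ to check $f(c) \in \mathcal{T}$, and confirms $f(c) \neq f(q)$. This direction is routine and should not present any obstacles.

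For NP-hardness I would reduce from 3-SAT. Given a 3-CNF formula $\phi(x_1, \ldots, x_n)$, I build the instance $(f, q, \mathcal{T})$ with domain $\mathcal{X} = \{0,1\}^{n+1}$, codomain $\mathcal{Y} = \{0,1\}$, model $f(x_1, \ldots, x_n, y) = \phi(x_1, \ldots, x_n) \land y$, query $q = (0, \ldots, 0)$, and target $\mathcal{T} = \{1\}$, which has the trivial equality-with-$1$ membership oracle. Then $f(q) = 0 \neq 1$, and the counterfactual set $\mathcal{S} = f^{-1}(\{1\})$ is non-empty iff $\phi$ is satisfiable: a satisfying assignment extends to a counterfactual by setting $y = 1$, and any counterfactual must set $y = 1$ and satisfy $\phi$. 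The extra coordinate $y$ is a mild technical device that makes $f(q) = 0$ hold unconditionally, regardless of the value of $\phi$ on the all-zeros input.

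The main obstacle is interpretive rather than combinatorial. Taken strictly, $\mathcal{T} \in \widetilde{\mathbb{T}}_q^f$ forces $\mathcal{T} \subseteq \{f(x) : x \in \mathcal{X}\} \setminus \{f(q)\}$, which trivialises the question since any non-empty such $\mathcal{T}$ automatically has non-empty preimage. The intended reading — the one that makes the problem interesting and coheres with the rest of the paper — is that the input supplies only a polynomial-time membership oracle for some $\mathcal{T} \subseteq \mathcal{Y}$ with $f(q) \notin \mathcal{T}$, and we must decide whether $\mathcal{T}$ is hit by $f$. Once that is clarified, the remaining work is routine: checking that the reduction runs in polynomial time and that $f$ admits a polynomial-size circuit built from the CNF description of $\phi$.
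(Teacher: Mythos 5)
Your membership argument coincides with the paper's: the witness is a candidate counterfactual $c$, verified by evaluating the (well-behaved, hence polynomial-time) model circuit and the membership circuit for $\mathcal{T}$. The hardness half is where you genuinely diverge, and to your advantage. The paper's hardness paragraph starts from a given \textsc{CFX-Existence} instance and \emph{builds} a formula $\phi$ that is true iff $f(x)\in\mathcal{T}$; as written, that is a reduction \emph{to} \textsc{SAT}, which (re)establishes membership in \textsf{NP} rather than hardness. Your construction runs in the correct direction: from an arbitrary 3-CNF $\phi$ you produce the instance $f(x,y)=\phi(x)\wedge y$, $q=0^{n+1}$, $\mathcal{T}=\{1\}$, with the auxiliary coordinate $y$ guaranteeing $f(q)=0$ unconditionally, and $\mathcal{S}\neq\varnothing$ iff $\phi$ is satisfiable. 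This is the explicit, checkable reduction the theorem actually needs. Your interpretive caveat is also well taken and implicitly adopted by the paper: under the literal reading of Definition~\ref{def:cfx_algebra}, every $\mathcal{T}\in\mathbb{T}^f_q$ is a subset of the image of $f$, so any non-empty $\mathcal{T}$ trivially has non-empty preimage (indeed, certifying $\{1\}\in\widetilde{\mathbb{T}}^f_q$ in your reduction would already presuppose satisfiability of $\phi$). The problem is only non-trivial when $\mathcal{T}$ is specified extensionally by a membership circuit over $\mathcal{Y}$, which is how the paper's formal problem statement and its inclusion argument in fact treat it. In short: same proof skeleton, but your hardness direction is the rigorous version of what the paper only gestures at.
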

    \begin{psketch}
        Polynomial verification and a reduction from \textsc{SAT}; see
        \autoref{app:complexity_results}.
    \end{psketch}
\end{tcolorbox}

\autoref{thm:np_complete} assumes that the function $f$ is efficient to compute,
which is true for many models developed in practice.
For classifiers, where the set $\mathcal{Y}$ is finite, computing set membership is trivial, and 
the convention is to express the target set for a given query point
$q$ as $\mathcal{T} = \{y \in \mathcal{Y} : y \ne
f\!\left(q\right)\}$, or some subset thereof. Since the codomain is finite, it
follows that the target set must also be finite, and thus one can construct
practical algorithms that provide candidate solutions to \textsc{CFX-Existence} under mild constraints
on the model and its domain (see \autoref{sec:related_work} and references therein).
In contrast, regression models have codomains that are subsets of an $n$-dimensional real Euclidean space,
and the definition of a ``valid'' CF is much more nuanced. First, we may
not always have an efficient algorithm for establishing whether a value is even
present in the target set. Second, it is not clear how to choose a set from the model-induced
algebra so as to obtain ``realistic'' counterfactuals.

\begin{wrapfigure}{R}{0.6\textwidth}
    \centering
    \tikzmath{\xx1 = 1.9; \xx2 = 4.0;} 
     \begin{tikzpicture}[
        declare function={ 
        g(\x) = (0.9 * ln(\x / 1.5) + 1) ;
        h(\x) = (((g(\xx2) - g(\xx1)) / pow((\xx2 - \xx1), 7) ) * pow((\x - \xx1), 7)) + g(\xx1) ;
        f(\x) = (\x <= \xx1) * g(\x) + (\x > \xx1) * h(\x) ;
        },
        ]
     \tikzmath{\qx = 0.9; \qy = g(\qx); 
                \xy1 = g(\xx1);
                \xy2 = g(\xx2);
                \epsx1 = 1.5; \epsy1 = g(\epsx1);
                \epsx2 = 2.4; \epsy2 = g(\epsx2);
                \xoffset =0.3; \epsxoffset=1.25;} 
        \draw[->] (-0.05, 0) -- (5, 0) node[right] {$x$};
        \draw[->] (0, -0.05) node[below] {} -- (0, 2.7) node[left] {$y$};
        \draw[black!50] (\qx, -0.05) node[below, black] {\small$q$} -- (\qx, 2.5);
        \draw[black!50] (-1.7, \qy) node[left, black] {\small$f(q)$} -- (\qx + \xoffset, \qy);
        \draw[black!50, dashed] (-0.1, \epsy1) node[left, black] {\small$f(q) + \varepsilon_1$} -- (\epsx1 + \xoffset, \epsy1);
        \draw[black!50] (\xx1, -0.05) node[below, black] {\small$x_1$} -- (\xx1, 2.5);
        \draw[black!50] (-1.7, \xy1) node[left, black] {\small$f(x_1)$} -- (\xx1 + \xoffset, \xy1);
        \draw[black!50] (\xx2, -0.05) node[below, black] {\small$x_2$} -- (\xx2, 2.5);
        \draw[black!50, dashed] (-0.1, \epsy2) node[left, black] {\small$f(q) + \varepsilon_2$} -- (\epsx2 + \xoffset + \epsxoffset, \epsy2);
        \draw[black!50] (-1.7, \xy2) node[left, black] {\small$f(x_2)$} -- (\xx2 + \xoffset, \xy2);
        \draw[scale=1, domain=0.5:4.2, smooth, variable=\x, blue] plot ({\x}, {f(\x)}) node[right, blue] {\small$f(x)$};
        \draw  (\xx1 + 0.25, -0.22) edge[<->] node[below] {\small$\Delta$} (\xx2 - 0.28, -0.22);

    \end{tikzpicture}
    \caption{Threshold robustness issue with CFEs: choosing $\varepsilon_2$
    over $\varepsilon_1$ yields $x_2$ rather than $x_1$ as the counterfactual.}
    \label{fig:regression_cf_toy_example}
\end{wrapfigure}
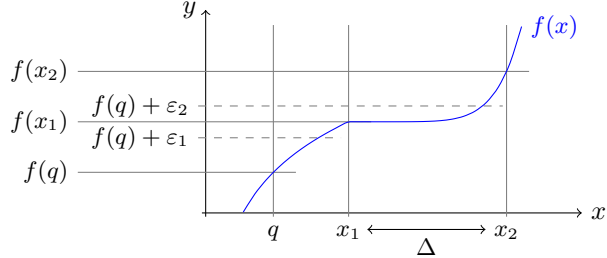

\paragraph{Setting thresholds for regression models is brittle.}
For scalar regression problems, one solution to specifying validity of CFs
is to set a threshold $\varepsilon$, so that the CF set is
$\mathcal{S} \doteq \left\{x\in\mathcal{X} :
\left\lvert f\!\left(x\right) - f\!\left(q\right) \right\rvert \geq
\varepsilon\right\}$.
This construction yields an equivalency between
instances of \textsc{CFX-Existence} for (binary) classifiers and regressors.
They are also tractable, since the targets are defined using
preorders, which admit trivial verification circuits.
However, thresholding does not distinguish between CFs in 
$\mathcal{S}$, which is a problem when the distance in $x$
far exceeds $\varepsilon$,
leading to unrealistic CFs that are far from the query point.
Furthermore, CFs defined via thresholding can be
very sensitive to
$\varepsilon$.
\autoref{fig:regression_cf_toy_example} shows an example where
the query instance $q\in\mathcal{X}$ is bounded above by $q < x_1 < x_2$,
and $f$ is monotone increasing such that
$f(q) < f(q) + \varepsilon_1 < f(x_1) < f(q) + \varepsilon_2 < f(x_2)$ for
$0 < \varepsilon_1 < \varepsilon_2$.
Define $\Delta \doteq x_2 - x_1$.
Setting $\varepsilon_1$ as the threshold yields the CF $x_1$,
whereas choosing $\varepsilon_2$ yields $x_2$ instead.
Considering the distance 
When $\Delta$ is large, $x_2$ is further from $q$ than $x_1$ is,
and thus $\varepsilon_2$ is arguably a worse threshold than $\varepsilon_1$.
as it yields CFs far from the query point.
However, there is no \textit{ex ante} way to choose between $\varepsilon_1$ and $\varepsilon_2$,
which causes this threshold robustness issue.
A key contribution our work is to formalise the notion of regression
counterfactuals in terms of \emph{potentials} instead of the direct
instantiation of the primal-dual spaces via thresholds, which we will now describe.

\subsection{Potential-Based Search}
In this work, we focus on a subset of $\mathbb{T}^f_q$ where, for
a given query $q\in\mathcal{X}$, we ascribe to each output $y \in \mathcal{Y}$ a
scalar potential which quantifies the \emph{value} associated with
candidate counterfactual points. This concept is closely related to the construction used
in potential games~\citep{monderer:1996:potential} to analyse equilibria when
agents' incentives are dictated by a single global function.
\autoref{def:potential_function} below formalises this.

\begin{definition}[Potential]\label{def:potential_function}
    A \emph{counterfactual potential
    function} is an element $\rho$ of the set
    \begin{equation}
        \mathcal{R}_{q}^f \doteq \left\{ \rho \in
        C_L^1\!\left(\mathcal{Y}, \Re\right) : \max_{x\in\mathcal{X}}
        \rho\!\left(f\!\left(x\right)\right) -
        \rho\!\left(f\!\left(q\right)\right) > 0 \right\}
    \end{equation}
    of all continuously differentiable maps between model
    outputs and the real line, where
    $(f,q)$ is a model-query pair,
    $\rho$ and $\nabla\rho$ are $L$-Lipschitz,
    and the value of $\rho$ is not maximized at $q$.
\end{definition}

Using this notion of a potential, we can now refine the counterfactual duality
of \autoref{def:cfx_duality} into \autoref{def:potential_duality} below,
which naturally characterises CFs for regression models.

\begin{definition}[Potential Duality]\label{def:potential_duality}
    For a model-query pair $\left(f, q\right)$ and potential
    $\rho\in\mathcal{R}_{q}^f$, define the
    \emph{$\varepsilon$-optimal primal-dual spaces} as
    \begin{align}
        \mathcal{T}_\rho^\varepsilon &\doteq \left\{ y = f\!\left(x\right) :
        x\in\mathcal{X},\, \rho\!\left(y\right) \geq (1 - \varepsilon)\rho^\star
        \right\} \in \mathbb{T}^f_\rho \subset \mathbb{T}^f_q,
        \label{eq:eps_target} \\
        \mathcal{S}_\rho^\varepsilon &\doteq \left\{ x\in\mathcal{X} :
        f\!\left(x\right) \in \mathcal{T}_q^\varepsilon \right\},
    \end{align}
    where $\rho^\star \doteq
    \max_{x\in\mathcal{X}}\rho\!\left(f\!\left(x\right)\right)$ and
    $0 \leq \varepsilon <1$. Let the induced sub-algebra of (potential-based)
    target sets be denoted by $\mathbb{T}_\rho^f \subset \mathbb{T}^f_q$ and
    defined as the powerset over the
    $(1-\varepsilon)\rho^\star$-superlevel sets. As in \autoref{def:cfx_algebra}, let
    $\widetilde{\mathbb{T}}_\rho^f \doteq \mathbb{T}_\rho^f \cap
    \widetilde{\mathbb{T}}_q^f$ denote the subset of efficient potential-based
    target sets.
\end{definition}

\begin{remark}
    The model and its properties will have a strong bearing on the nature of the potential-based target sets. For example, continuity and differentiability will lead to closed target sets since \autoref{def:potential_function} also stipulates that each $\rho$ be in $C^1_L$. Under certain conditions it can even be shown that
    $\mathbb{T}_\rho^f$ is a connected set in which case we can derive simple membership circuits; e.g., for boxes or bounded convex
    polytopes. This implies that it may often be practical and feasible
    to choose a target set that we know ex ante is a member of the (efficient) sub-algebra
    $\widetilde{\mathbb{T}}_\rho^f$.
\end{remark}

A counterfactual search problem that is expressible using potential duality can always be recast as an
optimisation problem, where the
goal is to find one or more points in a given counterfactual set
$\mathcal{S}_\rho^\varepsilon$.
Note, however, that \autoref{eq:eps_target} defines the target set under a global sense of optimality,
whereas existing gradient-based methods \citep{wachter2018a,DiCE}
only find locally optimal solutions.
Such methods find a CF as the limit point of a sequence $\left\{x_n\right\}_{n\in\mathbb{N}_+}$ defined by a recurrence relation of the form:
\begin{equation}\label{eq:recurrence}
    x_{n+1} \gets \Pi_{\mathcal{X}}[x_{n} + \eta\nabla
    \rho\!\left(f\!\left(x_n\right)\right)] = \Pi_{\mathcal{X}}[x_{n} + \eta\rho'(f(x_n))^\top f'(x_n)],
\end{equation}
where $\Pi_{\mathcal{X}}$ denotes an Euclidean projection onto $\mathcal{X}$,
and thus cannot apply to nondifferentiable models like boosted decision trees.
Furthermore, gradient-based methods can fail with high probability,
even if $\nabla f$ exists~\citep{Jain_2017}. Nevertheless, we show below that finding a CF using \autoref{eq:recurrence} is \textsf{CLS}-complete while finding a globally optimal CF is \textsf{CLS}-hard. These complexity results complement those of \citet{tsirtsis:2020:decisions} and also suggest that any instance of \textsc{CFX-Potential-Localopt} can be transformed in polynomial time into problems such as
\textsc{GD-Finite-Diff} \citep{fearnley:2021:complexity}. This affords us access to a powerful toolbox of methods from numerical computing and optimisation.

\begin{tcolorbox}[title=\textsc{CFX-Potential-(Localopt/Globalopt)} (informal)]
    \textbf{Input:} A differentiable model $f : \Re^n \to \Re^m$ and potential
    function $\rho \in \mathcal{R}_{q}^f$.
    \\[0.5em]
    \textbf{Goal (\textsc{Localopt}):} Find a point $c\in\mathcal{X}$ such that $\left\lvert\left\lvert c - \Pi_{\mathcal{X}}[c + \eta\nabla
    \rho\!\left(f\!\left(c\right)\right)]\ \right\rvert\right\rvert_2 \leq \delta$.
    \\[0.5em]
    \textbf{Goal (\textsc{Globalopt}):} Find an element of the counterfactual set
    $\mathcal{S}^0_\rho$.
    \tcblower
    \begin{theorem}\label{thm:cls_complete}
        \textsc{CFX-Potential-Localopt} is \textsf{CLS}-complete.
    \end{theorem}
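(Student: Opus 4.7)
The statement has two halves: membership in \textsf{CLS} and \textsf{CLS}-hardness. I would prove each via a direct reduction to/from a gradient descent variant recently shown to be \textsf{CLS}-complete by \citet{fearnley:2021:complexity}; for concreteness I would target \textsc{GD-Local-Search} (the projected-gradient analogue of \textsc{GD-Finite-Diff} alluded to just before the theorem).

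\textbf{Membership.} The iterate map in~\eqref{eq:recurrence} is exactly projected gradient ascent on the composed objective $h \doteq \rho \circ f$. By \autoref{def:potential_function} both $\rho$ and $\nabla\rho$ are $L$-Lipschitz, and under the standing smoothness hypothesis on the model (so that $f$ and its Jacobian are Lipschitz on $\mathcal{X}$), the chain-rule identity $\nabla h(x) = f'(x)^\top \nabla\rho(f(x))$ inherits a polynomially-bounded Lipschitz constant. Hence any $c$ satisfying $\|c - \Pi_\mathcal{X}[c + \eta\nabla h(c)]\|_2 \leq \delta$ is a $\delta$-approximate stationary point of projected gradient on $-h$, which is literally an instance of \textsc{GD-Local-Search} on $-h$. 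This places \textsc{CFX-Potential-Localopt} in \textsf{CLS}.

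\textbf{Hardness.} For the converse, I reduce from \textsc{GD-Local-Search}. Given $g : [0,1]^n \to \Re$ with $L$-Lipschitz gradient together with step size $\eta$ and tolerance $\delta$, take $\mathcal{X} \doteq [0,1]^n$, let $f : \Re^n \to \Re^n$ be the identity map, and set $\rho(y) \doteq -g(y)$. Then $\nabla\rho(f(c)) = -\nabla g(c)$, so the CFX iterate map coincides bit-for-bit with projected gradient \emph{descent} on $g$. Picking any query $q$ with $g(q) > \min_x g(x)$ discharges the strict-maximum clause of \autoref{def:potential_function} (a constant $g$ is trivial on both sides, so $g$ may be assumed non-constant without loss of generality). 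Any solution returned by an oracle for \textsc{CFX-Potential-Localopt} on this instance is, verbatim, a solution to the original \textsc{GD-Local-Search} instance, completing the reduction.

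\textbf{Main obstacle.} The principal technical concern is Lipschitz- and step-size bookkeeping in the membership direction: I must verify that the composite Lipschitz constant of $\nabla(\rho\circ f)$ has bit-length polynomial in the input so that the precision $\delta$ required of the underlying \textsc{GD-Local-Search} solver is also polynomial. This is routine but essential for the reduction to be a valid \textsf{CLS} reduction. The hardness direction is lossless by construction (identity $f$, negated $g$), so the only residual subtlety there is certifying $\rho \in \mathcal{R}_q^f$---in particular that $q$ is not a maximiser of $\rho\circ f$---which is exactly why $q$ is chosen off the minimum of $g$.
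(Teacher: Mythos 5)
Your overall architecture coincides with the paper's: membership by observing that the solution criterion is an approximate fixed point of projected gradient ascent on the composite $\rho\circ f$, whose gradient is computable by the chain rule from the given circuits, followed by a reduction into one of the \textsf{CLS}-complete gradient-descent problems of \citet{fearnley:2021:complexity}; and hardness by a reduction from \textsc{GD-Local-Search} that arranges $\rho\circ f = -g$. The paper routes membership through \textsc{General-Continuous-Localopt} (via Proposition~5.6 of \citet{fearnley:2021:complexity}) rather than \textsc{GD-Local-Search}, and in the hardness direction places the negation in the model ($f \doteq -g$ with $\rho(y) \doteq y$) where you place it in the potential ($f \doteq \mathrm{id}$ with $\rho \doteq -g$); these choices are interchangeable, and both preserve the Lipschitz data so the violation solutions carry over either way.

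The one step that does not go through as written is your claim to discharge the promise by ``picking any query $q$ with $g(q) > \min_x g(x)$.'' Certifying that a given $q$ satisfies this inequality is itself a global optimisation problem over the circuit $g$ --- the paper explicitly notes that checking the promise of \autoref{def:potential_function} ``amounts to a global optimisation problem in and of itself'' --- so your reduction cannot efficiently produce an instance that is guaranteed to satisfy the promise, and assuming $g$ non-constant does not help because you still cannot locate (or verify) a non-minimiser in polynomial time. The paper resolves this differently: it concedes that the reduction is promise-breaking and argues this is harmless, since the solution criterion and the violation conditions (which depend only on the unchanged Lipschitz constants) are identical whether or not the promise holds, so any answer returned by a \textsc{CFX-Potential-Localopt} oracle is still a valid \textsc{GD-Local-Search} solution. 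Your proof needs that observation in place of the $q$-selection step; with that substitution, and the Lipschitz/precision bookkeeping you already flag for the membership direction, the argument matches the paper's.
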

    \begin{psketch}
        Show inclusion by reducing to \textsc{General-Continuous-Localopt}, then
        hardness follows by reducing from
        \textsc{GD-Local-Search}~\citep{fearnley:2021:complexity}; see
        \autoref{app:complexity_results}.
    \end{psketch}
    \begin{corollary}\label{corr:cls_hard}
        \textsc{CFX-Potential-Globalopt} is \textsf{CLS}-hard.
    \end{corollary}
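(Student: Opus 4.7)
The plan is to reduce \textsc{CFX-Potential-Localopt}, which is \textsf{CLS}-hard by \autoref{thm:cls_complete}, to \textsc{CFX-Potential-Globalopt} in polynomial time; \textsf{CLS}-hardness of the latter then follows by transitivity of poly-time reductions. Concretely, the reduction I would use is the identity map on instances: given $(f, \rho, \delta)$ as an instance of \textsc{CFX-Potential-Localopt}, we forward the pair $(f, \rho)$ unchanged to an oracle for \textsc{CFX-Potential-Globalopt} and return whatever point $c \in \mathcal{S}^0_\rho$ it produces.

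Next I would verify soundness. By \autoref{def:potential_duality} taken at $\varepsilon = 0$, membership $c \in \mathcal{S}^0_\rho$ means $\rho(f(c)) \geq \rho^\star$, so $c$ is a global maximiser of $\rho \circ f$ over $\mathcal{X}$. Since $\mathcal{X} = \Re^n$ in the statement of \textsc{CFX-Potential-(Localopt/Globalopt)}, the projector $\Pi_{\mathcal{X}}$ is the identity and first-order optimality of $c$ for the smooth objective $\rho \circ f$ (guaranteed because $\rho \in C^1_L$ by \autoref{def:potential_function} and $f$ is differentiable) gives $\nabla\rho(f(c))^\top f'(c) = 0$. Substituting into \autoref{eq:recurrence}, we obtain $\|c - \Pi_{\mathcal{X}}[c + \eta\nabla\rho(f(c))]\|_2 = 0 \leq \delta$ for every admissible $\eta$ and every tolerance $\delta \geq 0$, so $c$ is a bona-fide solution to the original \textsc{Localopt} instance.

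The main obstacle I anticipate is ensuring that the forwarded \textsc{Globalopt} instance is well-posed, i.e.\ that $\mathcal{S}^0_\rho \neq \emptyset$ so that the oracle has something meaningful to return. Fortunately this is baked into the admissibility requirements on $\rho$: \autoref{def:potential_function} demands $\max_{x \in \mathcal{X}} \rho(f(x)) - \rho(f(q)) > 0$, where the use of $\max$ rather than $\sup$ encodes attainment of $\rho^\star$ and the strict inequality guarantees at least one maximiser distinct from $q$. Hence every valid \textsc{Localopt} input also constitutes a valid \textsc{Globalopt} input, and the identity reduction is total.

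Finally, since the reduction performs no computation at all, it runs in constant (and therefore polynomial) time. Composing it with the \textsf{CLS}-hardness reduction established in \autoref{thm:cls_complete} yields a polynomial-time reduction from any problem in \textsf{CLS} to \textsc{CFX-Potential-Globalopt}, which is exactly the statement of the corollary. I would then remark that one cannot hope to strengthen this to \textsf{CLS}-completeness in the same way, since membership of \textsc{CFX-Potential-Globalopt} in \textsf{CLS} is non-trivial (global optimisation of smooth non-convex functions is not known to reduce to \textsc{General-Continuous-Localopt}); this asymmetry is the reason the result is stated as hardness rather than completeness.
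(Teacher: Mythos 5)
Your proposal is correct and is essentially the paper's own argument: the paper likewise notes that any element of $\mathcal{S}^0_\rho$ (\textsc{Globalopt} with $\varepsilon \doteq 0$) is automatically a solution of the local-search problem, so \textsf{CLS}-hardness transfers from \autoref{thm:cls_complete} (equivalently, from \textsc{GD-Local-Search} directly). The one slip is your claim that $\mathcal{X} = \Re^n$ makes $\Pi_{\mathcal{X}}$ the identity --- in the formal statement $\mathcal{X}$ is a bounded polytope, so a global maximiser may lie on the boundary with nonvanishing gradient; the correct one-line fix is that a global maximiser of a $C^1$ function over a convex set is a fixed point of the projected-gradient map, which still gives $\lVert c - \Pi_{\mathcal{X}}[c + \eta\nabla\rho(f(c))]\rVert_2 = 0 \leq \delta$.
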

    \begin{psketch}
        Follows directly from the injective mapping between solutions of \textsc{CFX-Potential-Globalopt} and \textsc{CFX-Potential-Localopt}; see
        \autoref{app:complexity_results}.
    \end{psketch}
\end{tcolorbox}

\subsubsection*{The Exponential-Polynomial Family}
Potentials should facilitate CFs that are neither too near nor too far
from the query, to ensure that CFs are both interpretable and actionable~\citep{Ustun2019,Pawelczyk2020}.
To this end, we introduce the exponential-polynomial (EP) family of potentials in \autoref{def:ep_family} below that have
this ``sweet spot'' property.

\begin{definition}[EP Family]\label{def:ep_family}
    For a model-query pair $(f, q)$, define the asymmetric exponential-polynomial (AEP) potentials as the functions
    \begin{equation}\label{eq:aep_potentials}
        \rho^{\textrm{AEP}_\pm}_q\!\left(y; w\right) \doteq z_q\!\left(y; w\right)_\pm^2
        \exp\!\left\{-z_q\!\left(y; w\right)_\pm^2\right\},
    \end{equation}
    whose arguments are the real value $y \in \mathcal{Y} \subseteq \Re$ and
    width parameter $w > 0$, with
    $z_q\!\left(y; w\right) \doteq
    \frac{y - f\!\left(q\right)}{w}$, $[z]_+ \doteq \max{\{z, 0\}}$, $[z]_- \doteq -\min{\{z, 0\}}$.
    The corresponding symmetric EP (SEP) potential is then
    \begin{equation}\label{eq:sep_potential}
        \rho^\textrm{SEP}_q\!\left(y; w\right) = \rho^{\textrm{AEP}_+}_q\!\left(y; w\right) + \rho^{\textrm{AEP}_-}_q\!\left(y; w\right).
    \end{equation}
\end{definition}



\begin{figure}
    \centering
    \begin{tikzpicture}
        \draw[->] (-5, 0) -- (5, 0) node[right] {$y$};
        \draw[->] (0, -0.05) node[below] {$f(q)$} -- (0, 1.5) node[right] {$\rho_q\!\left(y\right)$};
        \draw[black!50] (-1.39, -0.05) node[below, black] {\small$f(q)-w$} -- (-1.39, 1.25);
        \draw[black!50] (1.39, -0.05) node[below, black] {\small$f(q)+w$} -- (1.39, 1.25);

        \draw[scale=1.4, domain=-3:0, smooth, variable=\x, red] plot ({\x}, {2*\x*\x * exp(-\x*\x)});
        \draw[scale=1.4, domain=0:3, smooth, variable=\x, blue] plot ({\x}, {2*\x*\x * exp(-\x*\x)});
    \end{tikzpicture}

    \caption{EP potential functions for query $q$, with
    $\rho_q^{\textrm{AEP}_-}$ and $\rho_q^{\textrm{AEP}_+}$ in
    red and blue, respectively.}
    \label{fig:exponential_polynomials}
\end{figure}
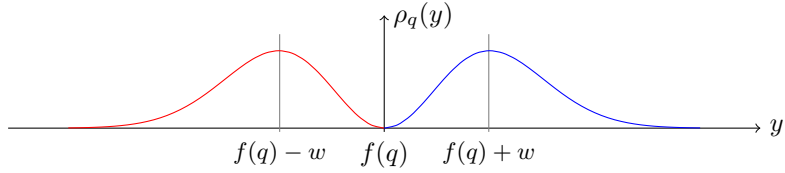

By construction, EP potentials have customisable optima,
and their shapes define nesting superlevel sets\footnote{The $\varepsilon$-optimal target sets form an annulus of outer radius $w$ and fixed inner radius under the SEP potential.}
which ensure consistent ordering of CFs.
\autoref{fig:exponential_polynomials} provides an example of EP potentials with maxima at $f(q) \pm w$,
where the signs correspond to those in \autoref{def:ep_family}.



\section{Bayesian Optimisation and the EI-CFX Algorithm}






Unlike existing work~\citep{wachter2018a,DiCE,Dandl2021}, we take a global optimisation perspective on the
problem of finding CFEs by leveraging \emph{Bayesian optimisation}~\citep{Mockus1975,Jones1998,ShahriariSWAF16},
a technique that is sample-efficient and has known convergence guarantees.
BO can be applied to non-differentiable models as it searches over
the posterior distribution of a Gaussian process (GP) surrogate,
and is thus applicable to models outside
\textsc{CFX-Potental-Globalopt}, like decision trees. However, as we show in \autoref{sec:experimental}, taking a na\"{i}ve approach to using BO in this setting can lead to very poor performance. This can be attributed to the following question: should the surrogate just predict $f$
or the entire composition $\rho \circ f$?
We argue that the former is better~\citep{astudillo:2019:bayesian},
since it makes our algorithm more effective and parallelisable by leveraging structure in the potential.

Suppose we have a model $f$ that is continuous over $\Re$
and have sampled $f\!\left(0\right) = 0$ and $f\!\left(1\right) = 2$ under a potential
$\rho\!\left(y\right) = y^2e^{-y^2}$, yielding $\rho\!\left(0\right) = 0$ and $\rho\!\left(2\right) = 4e^{-4}$.
The intermediate value theorem now implies that that there exists some $z \in \left(0, 1\right)$ with $f\!\left(z\right) = 1$ such that $(\rho\!\circ\!f)\!(z)$ attains its maximum value of $e^{-1}$.
A surrogate that models the composition $\rho \circ f$
will not be guaranteed to attain this maximum.
\autoref{fig:acq_convergence} shows corroborating numerical evidence
from a simple experiment, searching for
counterfactuals for a logistic regression model with two features.
We formulate the CF search using two different Bayesian optimisation problems,
(a) using the explicit form of our EP potential $\rho$ and modeling $f$ with the surrogate,
and (b) modeling the composition $\rho \circ f$ with the surrogate.
The former shows rapid convergence to the target set (red line),
with the acquisition function recovering the desired set with just 7 or 8 samples.
In contrast, the acquisition function of (b) does not show any meaningful
convergence even after 8 samples.

\begin{figure}
    \begin{subfigure}[t]{\textwidth}
        \centering
        \includegraphics[width=\textwidth]{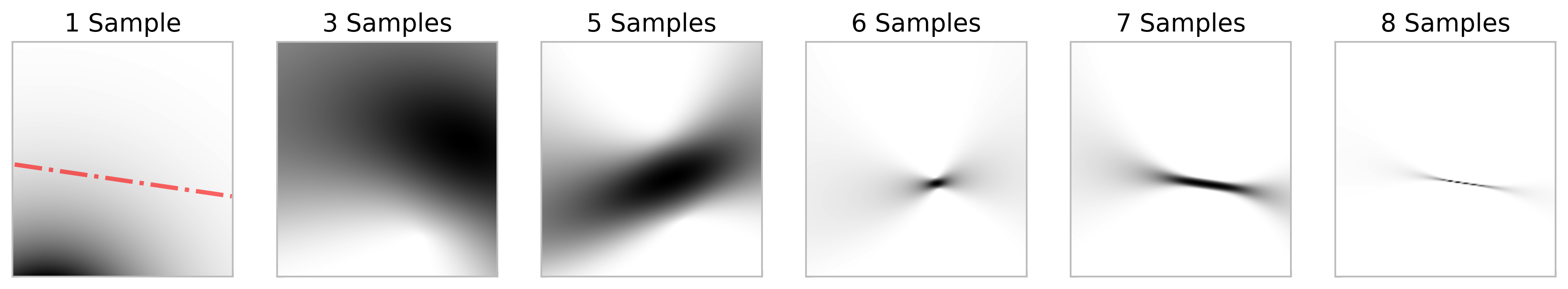}
        \caption{Exponential-Polynomial EI-CFX acquisition function (see \autoref{def:ei_cfx}).}
    \end{subfigure}
    \\
    \begin{subfigure}[t]{\textwidth}
        \centering
        \includegraphics[width=\textwidth]{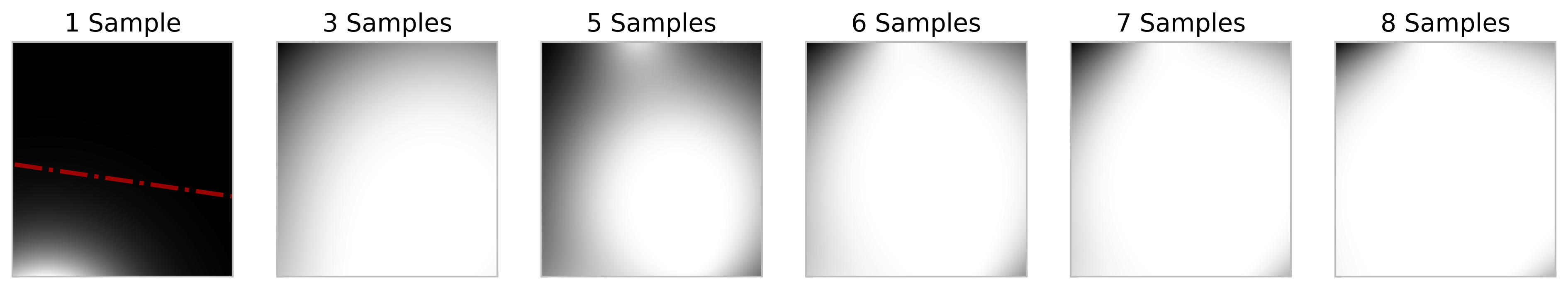}
        \caption{Traditional (black-box) EI acquisition function over composition $\rho\circ f$.}
    \end{subfigure}

    \caption{Acquisition function convergence for two features of a logistic regression model with EP potential and target of 50\% probability; see \autoref{def:ep_family}. The GP was conditioned on the same dataset in each row, darker pixels indicate higher values, and the red curve wraps the target set.}\label{fig:acq_convergence}
\end{figure}

To formalise our proposed method as a Bayesian optimisation problem we first take a scalar regression model $f : \mathcal{X} \to \Re$ and define the surrogate
$\hat f$ as being drawn from a GP prior, $\mathcal{GP}\!\left(\mu, K\right)$,
with mean function $\mu : \mathcal{X} \to \Re$ and covariance function $K :
\mathcal{X} \times \mathcal{X} \to \Re$. Given a dataset $\mathcal{D}_n
\doteq \left\{\left(x_i, f(x_i)\right)\right\}_{i\in[n]}$ we can also compute the
posterior distribution $\mathcal{GP}\!\left(\mu_n, K_n\right)$,
where the conditioned mean and covariance functions, $\mu_n : \mathcal{X} \to
\Re$ and $K_n : \mathcal{X}\times \mathcal{X} \to \Re$, can be evaluated in closed
form~\citep{gpml}.
Our novel acquisition function for performing (potential-based)
counterfactual search then follows by a refinement of the well-known expected
improvement function~\citep{Mockus1975,Jones1998}. As shown in
Proposition~\ref{prop:closed_form} and \autoref{fig:acq_convergence},
this function admits ``well-behaved,''
closed-form expressions for its value and derivative.

\begin{definition}[Expected Counterfactual Improvement]
    \label{def:ei_cfx}
    For a surrogate-potential pair $(\hat f, \rho)$, define the \emph{expected
    counterfactual improvement} as
    \begin{equation}\label{eq:ei_cfx}
        \begin{aligned}
            \textrm{EI-CFX}^\rho_n\!\left(x\right)
                &\doteq \mathbb{E}_n\!\left[\max\!\left\{0, \rho\circ\hat{f}\left(x\right) - \rho_n^\star\right\}\right]\\
                &= \frac{1}{\sqrt{K_n\!\left(x\right)}}\int_{\Re} \max\!\left\{0, \rho\!\left(y\right) - \rho_n^\star\right\}\,\phi\!\left(\frac{y - \mu_n\!\left(x\right)}{\sqrt{K_n\!\left(x\right)}}\right)\,\textrm{d}y,
        \end{aligned}
    \end{equation}
    where $\mathbb{E}_n\!\left[X\right] \doteq \mathbb{E}\!\left[X~\middle\vert~\mathcal{D}_n\right]$
    is an expectation conditioned on a set of $n$ samples
    $\mathcal{D}_n \doteq \left\{\left(x_i, f(x_i)\right)\right\}_{i\in[n]}$,
    $\rho^\star_n \doteq \max_i\left\{\rho(f(x_i)) : (x_i, f(x_i)) \in \mathcal{D}_n\right\}$
    is the maximum potential observed over these samples,
    and $\phi$ is the standard normal probability density function.
\end{definition}

\begin{proposition}\label{prop:closed_form}
    $\textrm{EI-CFX}$ and $\nabla\textrm{EI-CFX}$ are continuous functions of $\mathcal{X}$ for any $\rho \in \{\rho^{\textrm{AEP}_+}, \rho^{\textrm{AEP}_-}, \rho^{\textrm{SEP}}\}$ or $n \geq 1$.
\end{proposition}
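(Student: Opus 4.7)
The plan is to rewrite the EI-CFX integral in a form amenable to dominated convergence and then differentiate under the integral sign. First, via the substitution $t = (y - \mu_n(x))/\sigma_n(x)$ with $\sigma_n(x) \doteq \sqrt{K_n(x)}$, I would recast
\begin{equation*}
\textrm{EI-CFX}_n^\rho(x) = \int_\Re h\!\left(\mu_n(x) + \sigma_n(x)\, t\right)\phi(t)\,\mathrm{d}t, \qquad h(y)\doteq\max\{0,\rho(y)-\rho_n^\star\},
\end{equation*}
interpreting the right-hand side as $h(\mu_n(x))$ in the limit $\sigma_n(x)\to 0$. Under standard regularity of the GP kernel ($C^1$ with positive, noise-regularised posterior variance), $\mu_n$ and $K_n$ are $C^1$ on $\mathcal{X}$~\citep{gpml}. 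Moreover, each member of $\{\rho^{\textrm{AEP}_+}, \rho^{\textrm{AEP}_-}, \rho^{\textrm{SEP}}\}$ is bounded by $e^{-1}$ and belongs to $C^1_L$ for an explicit $L$ (as demanded already by \autoref{def:potential_function}), so $h$ is bounded and $L$-Lipschitz.

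Continuity of $\textrm{EI-CFX}$ then follows immediately from the dominated convergence theorem: the integrand is continuous in $x$ for every $t$ and uniformly dominated by $(\|\rho\|_\infty + |\rho_n^\star|)\,\phi(t)$, which is integrable and independent of $x$. For differentiability, I would apply Leibniz's rule. At almost every $t$ the partial derivative with respect to $x$ exists and equals
\begin{equation*}
h'\!\left(\mu_n(x) + \sigma_n(x)\, t\right)\bigl(\nabla\mu_n(x) + t\,\nabla\sigma_n(x)\bigr),
\end{equation*}
which is dominated locally in $x$ by $L\bigl(\|\nabla\mu_n\|_\infty + |t|\,\|\nabla\sigma_n\|_\infty\bigr)\,\phi(t)$, an integrable function of $t$. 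Continuity of $\nabla\textrm{EI-CFX}$ then follows from a second application of dominated convergence.

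The main obstacle is the pointwise non-differentiability of $h$ on the level set $Z\doteq\{y : \rho(y) = \rho_n^\star\}$. Here I would exploit the analytic structure of the EP family: each $\rho \in \{\rho^{\textrm{AEP}_+}, \rho^{\textrm{AEP}_-}, \rho^{\textrm{SEP}}\}$ is real-analytic on each side of $f(q)$, so $Z$ is finite and hence $\{t : \mu_n(x)+\sigma_n(x)t \in Z\}$ has Lebesgue measure zero whenever $\sigma_n(x)>0$. This legitimises the a.e.\ differentiation step above and underwrites the continuity of the derivative integrand at almost every $t$. A secondary care point is the degeneracy $\sigma_n(x)=0$ at noise-free training inputs, which is dispensed with either by imposing $K_n > 0$ on $\mathcal{X}$ (standard under observation noise) or by directly verifying the continuous limit, using the Lipschitz bound on $h$ to control $|h(\mu_n(x)+\sigma_n(x)t)-h(\mu_n(x))| \leq L\sigma_n(x)|t|$ uniformly in a neighbourhood.
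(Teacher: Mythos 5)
Your argument is correct in substance but takes a genuinely different route from the paper. The paper's proof is constructive: it uses a lemma on Lambert's $W$-function to locate the (at most four) real roots of $\rho(y)=\rho_n^\star$, replaces the $\max$ by explicit limits of integration over the superlevel regions, completes the square in the exponent (exploiting that an EP potential times a Gaussian density is again a polynomial times a Gaussian), and reduces everything to standard Gaussian moment integrals, so that continuity of $\textrm{EI-CFX}$ and $\nabla\textrm{EI-CFX}$ is read off a closed-form expression in $\Phi$ and $\phi$. You instead prove continuity abstractly via dominated convergence and a.e.\ differentiation under the integral sign, using only that $h=\max\{0,\rho-\rho_n^\star\}$ is bounded and Lipschitz and that the non-differentiability set of $h$ pulls back to a null set under the affine map $t\mapsto\mu_n(x)+\sigma_n(x)t$. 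Your route is shorter and more general --- it would apply to any bounded $C^1_L$ potential whose level sets at $\rho_n^\star$ are Lebesgue-null --- but it does not deliver the closed-form expression that the paper actually needs for implementation and that motivates the proposition in the first place. Two small caveats to tighten: \begin{enumerate*}[label=(\roman*)] \item the claim that $Z=\{y:\rho(y)=\rho_n^\star\}$ is finite fails for the AEP potentials when $\rho_n^\star=0$, since $\rho^{\textrm{AEP}_\pm}$ vanishes identically on a half-line; in that case, however, $h=\rho$ is itself $C^1$ and the measure-zero argument is not needed, so you should split this case off explicitly; and \item your fallback for $\sigma_n(x)=0$ via the Lipschitz bound establishes continuity of the \emph{value} but not of the \emph{gradient} at such points, so the assumption $K_n>0$ on $\mathcal{X}$ (which the paper's closed form also implicitly requires, as its integration limits carry a $1/\sigma_x$) should be stated as a hypothesis rather than an alternative. \end{enumerate*}
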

\begin{psketch}
    Identify the 0-superlevel sets and replace the $\max$ operation with refined limits of integration. The rest follows through routine analysis of Gaussian integrals; see \autoref{app:ei_cfx-derivation}.
\end{psketch}

Under our composite structure of modeling just $f$ with the surrogate,
we can show that Bayesian optimisation using EI-CFX converges
asymptotically to a globally optimal counterfactual. In other words, assuming
that the counterfactual sets $\mathcal{S}^\varepsilon_\rho$ are non-empty for
all $\varepsilon \geq 0$, our proposed algorithm is guaranteed to find a point
$c\in\mathcal{S}^0_q$ in the limit of infinitely many observations, $n\rightarrow\infty$. This
result, which we state informally below, establishes our algorithm as the first
method for finding CFEs for regression models with
global convergence guarantees. This proof holds even for
non-differentiable models,
which is a key advantage over past approaches~\citep{wachter2018a,DiCE,Dandl2021},
and motivates a wider adoption of Bayesian optimisation for solving
instances of \textsc{CFX-Potential-Globalopt} and related problems in this area.

\begin{theorem}\label{thm:ei_cfx}
    The EI-CFX acquisition function is asymptotically consistent.
\end{theorem}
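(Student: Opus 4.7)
The plan is to adapt the classical no-regret convergence theory for expected-improvement Bayesian optimisation, as developed by \citet{JMLR:v12:bull11a} and \citet{pmlr-v9-grunewalder10a}, to our compositional setting in which the surrogate models $f$ but the improvement is measured through $\rho$. Throughout I would assume the standard regularity conditions under which EI converges: $\mathcal{X}$ is compact, the kernel $K$ induces an RKHS containing $f$ (or is sufficiently universal for the arguments used there), and $\rho \in \mathcal{R}_q^f$, so in particular $\rho$ and $\nabla\rho$ are $L$-Lipschitz by \autoref{def:potential_function}.

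First I would show that EI-CFX is strictly positive at any point whenever $\rho_n^\star < \rho^\star$. By \autoref{prop:closed_form} the acquisition is continuous in $x$, and by the integral form in \autoref{eq:ei_cfx} it vanishes only where the GP posterior assigns zero probability mass to the superlevel set $\{y : \rho(y) > \rho_n^\star\}$. Non-degeneracy of the posterior at unsampled points, combined with the Lipschitz property of $\rho$, would give strictly positive posterior measure to this superlevel set in a neighbourhood of any point within distance $O((\rho^\star - \rho_n^\star)/L)$ of a true maximiser $x^\star$ of $\rho \circ f$.

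Second I would argue that the BO selection rule forces the sample set $\{x_i\}_{i \in [n]}$ to be asymptotically dense near at least one element of $\mathcal{S}_\rho^0$. The argument is the familiar no-regret one: if some open neighbourhood of a maximiser were visited only finitely often, the posterior variance there would stay bounded below, and by the first step the EI-CFX at such points would remain strictly larger than that of any alternative sequence of selections for infinitely many $n$, contradicting maximality of the chosen $x_{n+1}$. This mirrors the information-gain analysis of \citet{Srinivas_2012} adapted to EI.

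Third, density of the samples combined with uniform convergence of $\mu_n$ to $f$ on the attainable region and the Lipschitz bound $|\rho(f(x)) - \rho^\star| \leq L\,|f(x) - f(x^\star)|$ yields $\rho_n^\star \to \rho^\star$, which is precisely the asymptotic consistency claimed. The main obstacle is the compositional mismatch: classical EI convergence theorems assume the surrogate models the very quantity whose improvement is measured, whereas here the surrogate models $f$ directly and the improvement is taken in $\rho \circ f$. Bridging this gap relies crucially on \autoref{prop:closed_form} and the Lipschitz regularity built into \autoref{def:potential_function}, which together let one translate pointwise convergence of the GP posterior into convergence of the composite improvement; a secondary subtlety is ensuring uniform integrability of the integrand in \autoref{eq:ei_cfx} across the sequence of posteriors, which follows from the bounded, super-exponentially decaying tails of the EP family introduced in \autoref{def:ep_family}.
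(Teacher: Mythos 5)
Your proposal is sound in outline, but it takes a genuinely different route from the paper: the paper's proof is a one-line instantiation of Theorem~1 of \citet{astudillo:2019:bayesian}, which establishes asymptotic consistency of expected improvement for \emph{composite} objectives $g \circ f$ precisely in the regime you describe (surrogate on $f$, improvement measured through a known outer function), under the Generalised No-Empty-Ball property of the covariance kernel; one simply sets $g = \rho$. What you propose is, in effect, to reprove that theorem from the classical single-function EI theory. If you pursue that, three points in your sketch need tightening. First, the operative kernel condition is the (generalised) no-empty-ball property --- posterior variance at $x$ vanishes iff $x$ is an adherent point of the sample sequence --- not RKHS membership or universality per se; your ``non-degeneracy of the posterior at unsampled points'' is exactly this property and should be named and assumed explicitly. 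Second, the contradiction in your density argument is incomplete as stated: you must also show that $\textrm{EI-CFX}_n(x_{n+1})$ vanishes along a subsequence of the \emph{selected} points (via compactness, an accumulation point, NEB, and noiseless interpolation forcing the improvement at sampled points to zero); the information-gain machinery of \citet{Srinivas_2012} is tailored to UCB regret bounds and is the wrong tool here --- the Vazquez--Bect-style density argument underlying \citet{JMLR:v12:bull11a} is the right one. Third, your step~1 is slightly overcomplicated: since the Gaussian posterior has full support on $\Re$ and the superlevel set $\left\{y : \rho(y) > \rho_n^\star\right\}$ is open and non-empty whenever $\rho_n^\star < \rho^\star$, EI-CFX is strictly positive at \emph{every} point of positive posterior variance, with no need to localise near a maximiser; and once density is established, $\rho_n^\star \to \rho^\star$ follows from continuity of $f$ and $\rho$ alone, without uniform convergence of $\mu_n$. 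What each approach buys: the citation-based proof is short and delegates all of the above to a result built for this exact setting, while your re-derivation makes explicit that the Lipschitz regularity baked into \autoref{def:potential_function} and the continuity from \autoref{prop:closed_form} are what license the transfer from posterior convergence on $f$ to convergence of the composite incumbent --- a point the paper leaves implicit.
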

\begin{psketch}
    Follows directly from Theorem~1 of \citet{astudillo:2019:bayesian}; see \autoref{app:consistency}.
\end{psketch}

Global convergence and improved use of information are not the only advantages
of exploiting composition structure. With this approach, we are also able
to generate multiple counterfactuals in a single pass by treating the search
as a multi-objective problem \citep{lyu:2018:batch,Dandl2021}, as the
$n$-sample dataset $\mathcal{D}_n$ is entirely
independent of the particular choice of potential. This contrasts with
the alternative approach in which the potential is treated as part of the
black-box, is modelled end-to-end by the surrogate, and leaves us with a set of
samples that pertain only to a single counterfactual question. Here we can
solve an arbitrary number of acquisition functions at each step and make use of
the wider, shared dataset to bootstrap the search process,
which can lead to improved convergence rates and fewer samples being needed.





\section{Numerical Experiments}\label{sec:experimental}

The objective of this section is to establish whether the theoretical arguments
underpinning our Bayesian optimisation algorithm (and EI-CFX function) stand to
account when applied in practice. To do this, we consider two well-known
supervised learning problems based on:
\begin{enumerate*}[label=(\alph*)]
    \item the \emph{adult income} dataset~\citep{Dua:2019}, for which we use a logistic regression model; and
    \item the \emph{New York City (NYC) taxi trip duration}
        dataset~\citep{kaggle}, for which we train a light gradient boosting
        machine (LightGBM)~\citep{ke2017lightgbm}.
\end{enumerate*}
Details on hyperparameter selection and training are given in \autoref{app:model_training}.
For all experiments, the GPyOpt~\citep{gpyopt2016} library was used for the Bayesian optimisation components,
with the underlying GP furnished with an RBF kernel and zero mean function.

\begin{figure}
    \centering
    \begin{subfigure}[t]{0.48\textwidth}
        \centering
        \includegraphics[width=0.9\textwidth]{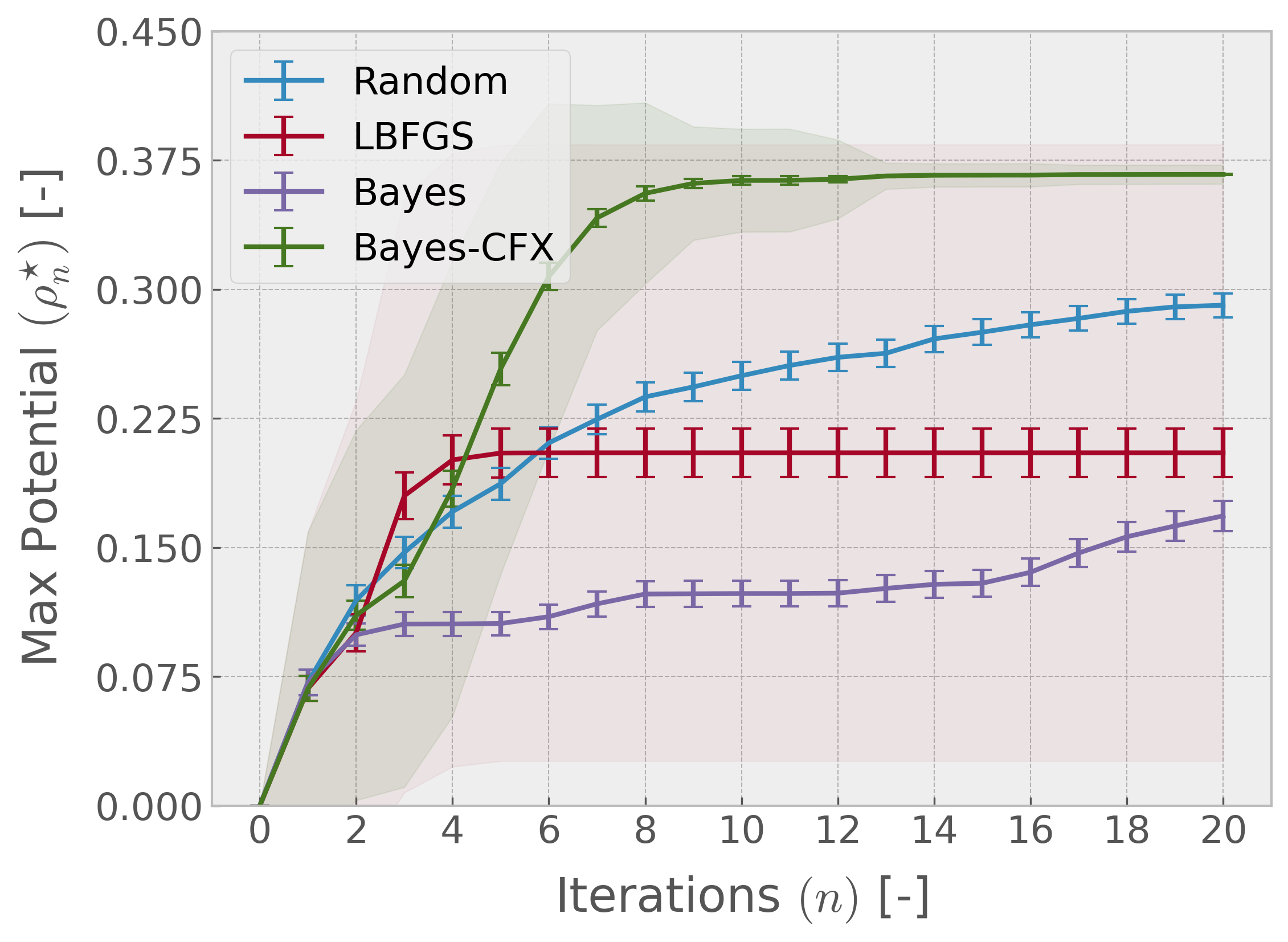}
    \caption{Average performance across all queries with the standard error of the mean.  The
        shaded regions depict the standard deviation for L-BFGS-B and
        Bayes-CFX.}\label{}
    \end{subfigure}
    \hfill
    \begin{subfigure}[b]{0.48\textwidth}
        \centering
        \begin{tabular}{rrrr}
            \toprule\toprule
            & A & CC & HPW \\
            \midrule
            Random & 18.4 & -25270 & 10.9 \\
            L-BFGS-B & 15.3 & -1060 & 12.3 \\
            Bayes & 16.3 & 54590 & 10.2 \\
            \textbf{Bayes-CFX} & \textbf{16.8} & \textbf{-24890} & \textbf{4.2} \\
            \bottomrule\bottomrule
        \end{tabular}
        \caption{Mean change to each feature across the
            counterfactuals generated for all queries
            for the features age (A), capital change (CC, 
            the sum of capital gain and capital loss), and 
            hours per week (HPW).}\label{tab:adult_residuals}
    \end{subfigure}

    \caption{A summary of the counterfactual experiment results on the adult
    income dataset.}\label{fig:adult_income}
\end{figure}

\paragraph{Adult Income.}
For the adult income dataset we posed the following broad question: ``what
would it take for the model to be class-indifferent, for young adults, given an initially confident classification?'' To assess this, we took all inputs with an age less than 30 and at least 90\% probability --- as designated by the logistic regression model --- of having a higher-income (i.e.\ $f(q) \geq 0.9$).
For each query instance $q$, we define the counterfactual potential $\rho_q$ as the AEP\textsuperscript{$-$} potential of~\autoref{def:ep_family} with the width $w$ set such that $f(q) - w \doteq 0.5$.
The CF search problem is thus formulated as the optimisation $\max_{x \in \mathcal{X}} \rho_q(f(x))$.
We then apply the following set of CF search algorithms and compare performance:
\begin{enumerate*}[label=(\alph*)]
    \item random search; 
    \item limited-memory Broyden--Fletcher--Goldfarb--Shanno quasi-Newton with bounds (L-BFGS-B) using box constraints~\citep{Byrd1995,L-BFGS-B} and finite-difference gradients, to facilitate a comparision with the other gradient-free methods;
    \item \emph{Bayes}, where we optimise the composite function $\rho_q(f(x))$ directly via Bayesian optimisation; and
    \item \emph{Bayes-CFX}, where we use the EI-CFX acquisition function of~\autoref{def:ei_cfx} with Bayesian optimisation.
    \end{enumerate*}

The results of this experiment, as illustrated in \autoref{fig:adult_income}, suggest
that our proposed method is highly sample efficient and consistent across query
instances and seeds. 
Panel (a) shows that Bayes-CFX converges after few queries to the highest potential value of the four methods.
Panel (b) shows that Bayes-CFX incurs the smallest change in the hours-per-week feature (HPW) on average, with changes in the remaining features consistent with the other methods.
The performance of L-BFGS-B was surprisingly poor, which could be due to
convergence to a local optimum or limited memory preventing such convergence entirely \cite{Byrd1995}.
Indeed, Bayes-CFX and random search both strongly support our thesis that performing
gradient-based search directly on a surrogate modeling $\rho\circ f$ is much less effective,
even when the model $f$ is differentiable. Note that random search, while certainly
sub-optimal, was also able to achieve competitive performance across all cases,
outperforming na\"{i}ve Bayesian optimisation and L-BFGS-B.

\begin{table}
    \centering
    \caption{Sample set of counterfactuals generated using Bayes-CFX on the NYC
    taxi trip duration (LightGBM) model. In each case the target deviation and
    outcome is provided, the maximum number of features allowed to change under
    an $\ell_0$ constraint, and the change in each feature.
    The query instance was for a one-person trip of 0.41~km that occurred on a Monday in May at 10AM.}\label{tab:taxi}

    \begin{tabular}{rc|cccc|r}
        \toprule\toprule
        Target && \multicolumn{4}{c|}{Change in} & Result\\
        change & $\ell_0$ Bound & Passengers & Weekday [days] & Time [hours] & Distance [km] & change \\
        \midrule
        \multirow{4}{*}{+20\%} & 1 & 0 & 0 & 0 & +2.16 & +19.6\% \\
        & 2 & 0 & +2 & 0 & +1.12 & +19.5\% \\
        & 3 & 0 & +1 & -1 & +1.18 & +19.9\% \\
        & 4 & +2 & -2 & +1 & +0.92 & +20.9\% \\
        \midrule
        +10\% & 3 & 0 & 0 & 0 & -0.31 & +10.1\% \\
        +50\% & 3 & 0 & +1 & -1 & +2.45 & +50.0\% \\
        \multirow{2}{*}{+100\%} & 3 & +2 & +2 & 0 & +3.67 & +56.7\% \\
        & $\infty$ & +2 & +2 & +1 & +3.89 & +57.9\% \\
        \bottomrule\bottomrule
    \end{tabular}
\end{table}

\paragraph{NYC Taxi Trip Durations.}
For the NYC trip duration dataset we wanted to explore the impact of sparsity (via an $\ell_0$ constraint) and satisfiability (via increasingly challenging targets) on the performance of the Bayes-CFX algorithm.
To this end, we took a random query instance --- modulo a constraint that the duration be in the lower tail of the distribution --- and posed the following question: ``what would it take to increase the trip duration by some fixed percentage?''
We allowed the search method to increase the number of passengers by up to two, perturb the weekday and hour by unit increments in the range $[-2, 2]$ and vary the distance by plus or minus one standard deviation about the query value.
Example CFEs for a representative query instance --- a one-person trip of 410m that occurred on a Monday in May at 10AM --- are shown in \autoref{tab:taxi}.

The results suggest that distance is the leading factor of trip durations: it appears in all counterfactuals regardless of the sparsity constraint. We also found that the number of passengers played a consistent role when the target deviation was very large. These are both intuitive results as they align with the natural causal model of the problem; one cannot escape the laws of physics. \autoref{tab:taxi} also highlights the value of counterfactuals for model diagnosis.
For the counterfactual targeting a fare increase of +10\%, we see that a substantial \emph{decrease} in the distance was the only necessary change,
which is unintuitive and could reflect traffic conditions, geographic locality or other confounding factors.
Additionally, the counterfactuals that are earlier in the day (time -1~hr, i.e., 9~am) tend to be more expensive,
which is understandable given that it is near the end of rush hour traffic.
Futhermore, we see also that counterfactuals that are later in the day (time +1~hr, i.e., 11~am) correlates with more passengers,
which could reflect a divide between business and social cab trips, particularly for weekend brunch (weekday -2 days and time +1 hour, i.e., Saturday at 11~am).
Finally, we note that even when there is no constraint, there are some target deviations that were not possible to satisfy; this observation was also validated via brute-force computation. Nevertheless, by accumulating a dataset of candidate counterfactuals, the Bayes-CFX algorithm was able to yield a global ``best-guess''. This is in contrast to existing (direct) gradient-based methods which are strictly local and path-dependent.

\section{Conclusions and Future Directions}
In this paper, the computational aspects of finding CFEs for regression models have been explored, the challenges around specification and formal robustness addressed, and a practical algorithm for solving such problems provided. This algorithm is shown to perform well in empirical experiments and is the first of its kind to enjoy global convergence guarantees (in the regression setting) --- a result that holds even when the model itself is non-differentiable. We argue that this motivates wider use of Bayesian optimisation in this problem domain, and suggest that future work explore this direction further. As part of the research process, we have also contributed a principled mathematical framework that lays the foundations for rigorous theoretical studies into the algebraic and geometric properties of regression counterfactuals. In particular, we note that it would be possible --- and indeed an interesting line of future work --- to refine our counterfactual algebra to the set of \emph{measurable} subsets in order to examine the probabilistic aspects of CFs. Moreover, it would also be of great interest to investigate the impact of different classes of models on the $\mathcal{S}$-$\mathcal{T}$ duality. When the model is differentiable, for example, it is clear that potential-based target sets will comprise closed and connected sets. It stands to reason that a better understanding of this and other behaviours would yield more efficient, tailored search methods.


\begin{ack}
The authors would like to acknowledge our colleague Nelson Vadori for their input
during the analysis in Proposition~\ref{prop:closed_form}.

\paragraph{Disclaimer}
This paper was prepared for informational purposes by the Artificial
Intelligence Research group of JPMorgan Chase \& Co and its affiliates (``J.P.\
Morgan''), and is not a product of the Research Department of J.P.\ Morgan.
J.P.\ Morgan makes no representation and warranty whatsoever and disclaims all
liability, for the completeness, accuracy or reliability of the information
contained herein. This document is not intended as investment research or
investment advice, or a recommendation, offer or solicitation for the purchase
or sale of any security, financial instrument, financial product or service, or
to be used in any way for evaluating the merits of participating in any
transaction, and shall not constitute a solicitation under any jurisdiction or
to any person, if such solicitation under such jurisdiction or to such person
would be unlawful.

\copyright{} 2021 JPMorgan Chase \& Co. All rights reserved.
\end{ack}

\bibliographystyle{plainnat}
\bibliography{ref}

\newpage\appendix
\section{Complexity Results}\label{app:complexity_results}
In this section we outline our proofs for the complexity results included in
the paper. We will restrict our attention only to \emph{well-behaved} circuits
that may be evaluated in polynomial-time with respect to their input and output
spaces. In particular, we leverage the definition of
\citet{fearnley:2021:complexity} which states that an arithmetic circuit is
well-behaved if, on any directed path that leads to an output, there are at
most $\log\!\left(\textrm{size}\!\left(f\right)\right)$ \emph{true}
multiplication gates. A true multiplication gate is one where both inputs
are non-constant nodes of the circuit. Even allowing an unrestricted number of
constant multiplications, it can be shown that
\begin{enumerate*}[label=(\alph*)]
    \item we can check that a given arithmetic circuit is well-behaved in
        polynomial time; and
    \item that such a circuit may be evaluated efficiently.
\end{enumerate*}
For a proof of this claim we refer the reader to Lemma~3.3 of
\citet{fearnley:2021:complexity} and their subsequent discussion on the
robustness of this formulation.

\subsection{\textsc{CFX-Existence}}
The first problem that was presented in the paper asks the question of whether
the primal space associated with a model-query-target set triple $\left(f, q,
\mathcal{T}_q\right)$ is or isn't non-empty. Note that since $\mathcal{T}_q \in
\mathbb{T}_q$, by construction we have that all elements in the dual space are
reachable, but we do not know whether the corresponding primal space
$\mathcal{S}_q$ has entries (see \autoref{def:cfx_duality}). For example, if
the model output $f\!\left(q\right)$ is only achieved at the query point
itself, then by definition there is no set in the algebra $\mathbb{T}_q$ such
that the primal is non-empty. Of course, we do not know this ex ante, and
finding a solution may require enumeration of all sets in the algebra in the
worst case! On the other hand, taking any query $q\in\mathcal{X}$, we can
always choose an empty dual space $\varnothing \in \mathbb{T}_q$ for which the
assertion is satisfied immediately. As we show below, this problem can be shown
to be \textsf{NP}-complete. To do so, we first define a \emph{membership
circuit} which is used to constrain the model domain in a principled way;
recall that the existence of such a circuit is the defining property of a
target set $\mathcal{T}$ that is an element of the efficient sub-algebra $\widetilde{\mathbb{T}}_q$.

\begin{definition}[Membership Circuit]\label{def:mem_circuit}
    Let $\mathcal{A}$ be an arbitrary topological space. A \emph{membership circuit}, $M_{\mathcal{T}} : \mathcal{A} \to \left\{0, 1\right\}$, with respect to a subset $\mathcal{T} \subseteq \mathcal{A}$ is then defined such that
    \begin{equation}
        M_{\mathcal{T}}\!\left(a\right) = \begin{cases}
            1 &\quad\textrm{if } a \in \mathcal{T}, \\
            0 &\quad\textrm{otherwise}.
        \end{cases}
    \end{equation}
\end{definition}

\begin{tcolorbox}[title=\textsc{CFX-Existence}]
    \textbf{Input:}
    \begin{itemize}[leftmargin=2em]
        \item A well-behaved arithmetic (model) circuit $f : \mathcal{X} \to \mathcal{Y}$,
        \item query instance $q\in\mathcal{X}$,
        \item and dual set $\mathcal{T}\in\widetilde{\mathbb{T}}_q^f$ with well-behaved membership circuit $M_\mathcal{T}$.
    \end{itemize}

    \textbf{Goal:} Decide if the counterfactual (primal) set $\mathcal{S}$ is non-empty.
\end{tcolorbox}

\begin{reptheorem}{thm:np_complete}
    \textsc{CFX-Existence} is \textsf{NP}-complete.
\end{reptheorem}
\begin{proof}
    We prove completeness by first showing that \textsc{CFX-Existence} admits polynomial-time solution verification, and then identifying a reduction from \textsc{SAT} which implies hardness.

    \paragraph{Inclusion.} We demonstrate a certificate of polynomial length that
    may be verified in polynomial time. Note that any proposed counterfactual
    point $c \in \mathcal{X}$ acts as a witness to verify that $\mathcal{S}$
    is non-empty. The certificate requirement is thus satisfied by construction since $\mathcal{X}$ is finite dimensional. Then, since the assertion that $f\!\left(c\right) \in \mathcal{T}$ may be computed in
    polynomial-time using the membership circuit, it follows that we have recovered the verifier definition
    of \textsf{NP}.

    \paragraph{Hardness.} We reduce from \textsc{SAT}. Let $\phi :
    \mathcal{X} \to \{0, 1\}$ denote a boolean formula which evaluates to
    \textsc{TRUE} (i.e.\ 1) iff $f\!\left(x\right) \in \mathcal{T}$, and
    \textsc{FALSE} (i.e.\ 0) otherwise; note that this operation can be
    performed in polynomial-time using the membership circuit. If the
    set $\mathcal{T}_q$ is empty then the formula $\phi$ is unsatisfiable,
    whence deciding if $\mathcal{S}$ is empty or not is \textsf{NP}-hard.
\end{proof}

\subsection{\textsc{CFX-Potential-Localopt}}
The second problem presented in the paper was concerned with \emph{finding} a counterfactual point that is locally optimal with respect to a given potential. The definition, which we state formally below, is closely related to the \textsc{GD-Local-Search} problem of \citet{fearnley:2021:complexity}. In particular, \textsc{CFX-Potential-Localopt} can be seen as a constrained variant in which we stipulate a promise that the function $\rho$ is a valid counterfactual potential; i.e.\ does not attain a global optimum at the query instance. In general, establishing whether this promise is held is hard since it amounts to a global optimisation problem in and of itself. As such, we rely on an explicit promise since it is unclear how one might define an efficient violation witness.\footnote{Note that, if we could design such a witness, a violation solution would also satisfy the requirements for our hardness reduction.} As we show in \autoref{thm:cls_complete} below, this does not change the complexity of the problem since the solution criterion is otherwise unchanged, though we acknowledge that this renders the reduction slightly less natural. Overall, the conclusions align with the results of \citet{fearnley:2021:complexity}, which suggest that the class \textsf{CLS} is robust to promise variants of the otherwise complete (total) problems considered. 

\begin{tcolorbox}[title=\textsc{CFX-Potential-Localopt}]
    \textbf{Input:}
    \begin{itemize}[leftmargin=2em]
        \item Bounded non-empty domain $\mathcal{X} = \left\{x \in \Re^n : Ax \leq b\right\}$ for $\left(A, b\right) \in \Re^{m\times n}\times\Re^m$,
        \item well-behaved arithmetic (model) circuits $f : \Re^n \to \Re^d$ and $\nabla f : \Re^n \to \Re^{d \times n}$,
        \item well-behaved arithmetic (potential) circuits $\rho : \Re^d \to \Re$ and $\nabla \rho : \Re^d \to \Re^d$,
        \item Lipschitz constant $L > 0$, step size $\eta > 0$, and tolerance $\delta > 0$.
    \end{itemize}

    \textbf{Promise:} The model circuit $\rho$ is a valid counterfactual potential (see \autoref{def:potential_function}).
    \vspace{0.25em}

    \textbf{Goal:} Find a point $c\in\mathcal{X}$ such that $\left\lvert\left\lvert c - \Pi_{\mathcal{X}}[c + \eta\nabla
    \rho\!\left(f\!\left(c\right)\right)]\ \right\rvert\right\rvert_2 \leq \delta$.
    \\[0.5em]
    Alternatively, we accept one of the following violation cases as a solution:
    \begin{itemize}
        \item One of $f$, $\nabla f$, $\rho$ or $\nabla\rho$ is not $L$-Lipschitz.
        \item $\nabla f$ is not the gradient of $f$.
        \item $\nabla \rho$ is not the gradient of $\rho$.
    \end{itemize}
\end{tcolorbox}

\begin{reptheorem}{thm:cls_complete}
    \textsc{CFX-Potential-Localopt} is \textsf{CLS}-complete.
\end{reptheorem}
\begin{proof}
    We prove completeness by first showing inclusion in \textsf{CLS}, and then identifying a reduction from \textsc{GD-Local-Search} which implies hardness.

    \paragraph{Inclusion.}
    Observe that \textsc{CFX-Potential-Localopt} reduces immediately to \textsc{General-Continuous-Localopt} by instantiating $p\!\left(x\right) \doteq \left(\rho\circ f\right)\!\left(x\right)$ and $g\!\left(x\right) \doteq x + \eta \nabla \left[\rho\!\left(f\!\left(x\right)\right)\right]$, where the gradient $\nabla\!\left[\rho\!\left(f\!\left(x\right)\right)\right] = \left[\nabla\rho\right]\!\left(f\!\left(x\right)\right)^\top \left[\nabla f\right]\!\left(x\right)$ follows from the chain rule. These quantities can be computed in polynomial-time since the operations comprise only of well-behaved arithmetic circuits and an $(1\times d)$-by-$(d\times n)$ vector-matrix product, where $d, n < \infty$. As shown by \citet{fearnley:2021:complexity} in Proposition~5.6, this implies inclusion in \textsf{CLS}. Furthermore, this result holds regardless of whether the potential promise is kept or not, since the remaining violation conditions are identical in both problems, and \textsc{General-Continuous-Localopt} otherwise has the same solution set.

    \paragraph{Hardness.} To show hardness, we need only prove that \textsc{GD-Local-Search} over the domain $\mathcal{X} \doteq \left[0, 1\right]^2$ can be reduced to \textsc{CFX-Potential-Localopt}; see Theorem~5.1 of \citet{fearnley:2021:complexity}. For this, consider an instance of \textsc{GD-Local-Search}, $(\delta, \eta, g, \nabla g, L)$, where the objective is to find a stationary point of $g$. In order to reduce to \textsc{CFX-Potential-Localopt}, we must identify an instance $(\delta, \eta, f, \nabla f, \rho, \nabla\rho, L)$ in which there is equivalence in solutions; i.e.\ an algorithm for solving \textsc{CFX-Potential-Localopt} can also be used to solve \textsc{GD-Local-Search}. This can be achieved by instantiating the potential as $\rho\!\left(y\right) \doteq y$ such that $\nabla\rho\!\left(y\right) = 1$, and model function as $f\!\left(x\right) \doteq -g\!\left(x\right)$ such that $\nabla f = -\nabla g$. This is promise-breaking in the sense that $\rho$ is not guaranteed to be a valid potential for any given $f$. However, the solutions to \textsc{CFX-Potential-Localopt} are all valid for \textsc{GD-Local-Search} even if the promise was indeed broken. With this construction we ensure that the Lipschitz constant is unchanged between the two problem instances, and thus the violation solutions are immediately preserved. We thus have a polynomial-time reduction and the proof is complete.
\end{proof}

\subsection{\textsc{CFX-Potential-Globalopt}}
\begin{tcolorbox}[title=\textsc{CFX-Potential-Globalopt}]
    \textbf{Input:}
    \begin{itemize}[leftmargin=2em]
        \item Bounded non-empty domain $\mathcal{X} = \left\{x \in \Re^n : Ax \leq b\right\}$ for $\left(A, b\right) \in \Re^{m\times n}\times\Re^m$,
        \item well-behaved arithmetic (model) circuits $f : \Re^n \to \Re^d$ and $\nabla f : \Re^n \to \Re^{d \times n}$,
        \item well-behaved arithmetic (potential) circuits $\rho : \Re^d \to \Re$ and $\nabla \rho : \Re^d \to \Re^d$,
        \item Lipschitz constant $L > 0$ and tolerance $\varepsilon > 0$.
    \end{itemize}

    \textbf{Promise:} The model circuit $\rho$ is a valid counterfactual (see \autoref{def:potential_function}).
    \vspace{0.25em}

    \textbf{Goal:} Find an element of the counterfactual set $\mathcal{S}^\varepsilon_\rho$.
    \\[0.5em]
    Alternatively, we accept one of the following violation cases as a solution:
    \begin{itemize}
        \item One of $f$, $\nabla f$, $\rho$ or $\nabla\rho$ is not $L$-Lipschitz.
        \item $\nabla f$ is not the gradient of $f$.
        \item $\nabla \rho$ is not the gradient of $\rho$.
    \end{itemize}
\end{tcolorbox}

\begin{repcorollary}{corr:cls_hard}
    \textsc{CFX-Potential-Globalopt} is \textsf{CLS}-hard.
\end{repcorollary}
\begin{proof}
    The proof follows using the same logic as that for \autoref{thm:cls_complete}, but instantiating \textsc{CFX-Potential-Globalopt} with the tolerance $\varepsilon \doteq 0$. This ensures that the only solutions are globally optimal points which will also be valid for \textsc{GD-Local-Search} since the gradient at any of the corresponding inputs must be zero. This concludes the proof.
\end{proof}

\section{EI-CFX Acquisition Function}\label{app:ei_cfx-derivation}
In this section we will go over the proof of Proposition~\ref{prop:closed_form} and examine the derivation of the closed-form expression for the EI-CFX acquisition function. Recall that, for a given model-potential pair $\left(f, \rho\right)$, the EI-CFX function is given by the Gaussian integral
\begin{displaymath}
    \textrm{EI-CFX}^\rho_n\!\left(x\right) = \frac{1}{\sqrt{K_n\!\left(x\right)}}\int_{\Re} \max\!\left\{0, \rho\!\left(y\right) - \rho_n^\star\right\}\,\phi\!\left(\frac{y - \mu_n\!\left(x\right)}{\sqrt{K_n\!\left(x\right)}}\right)\,\textrm{d}y.
\end{displaymath}
As will be shown below, this function can be evaluated efficiently and resolves to a continuous function with (relatively) well-behaved derivative.

\begin{repproposition}{prop:closed_form}
    $\textrm{EI-CFX}$ and $\nabla\textrm{EI-CFX}$ are continuous functions of $\mathcal{X}$ for any $\rho \in \{\rho^{\textrm{AEP}_+}, \rho^{\textrm{AEP}_-}, \rho^{\textrm{SEP}}\}$ or $n \geq 1$.
\end{repproposition}
\begin{proof}
    For clarity of exposition, fix the iteration number $n \geq 1$, take the symmetric EP potential $\rho^{\textrm{SEP}}$ for a query $q \in \mathcal{X}$, and define the shorthand notation $\alpha\!\left(x\right) \doteq \textrm{EI-CFX}^\rho_n\!\left(x\right)$. We will show that the analysis for $\rho^{\textrm{SEP}}$ generalises that of $\rho^{\textrm{AEP}_\pm}$ and is independent of the particular value of $n$.

    To begin, we perform a change of measure such that $\alpha\!\left(x\right)$ is expressed in terms of the standard Normal distribution:
    \begin{equation}
        \alpha\!\left(x\right) = \int_{\Re} \max\!\left\{0, \rho_q\!\left(\mu_x + \sigma_x z\right) - \rho^\star_n\right\} \, \phi\!\left(z\right) \, \textrm{d}z,
    \end{equation}
    where $\phi\!\left(\cdot\right)$ is the PDF of the standard Normal distribution with $y \doteq \mu_x + \sigma_{x} z$ and $z \sim \mathcal{N}\!\left(0, 1\right)$. The mean and standard deviation of the GP after the $n$ iterations have also been simplified to $\mu_x \doteq \mu_n\!\left(x\right)$ and $\sigma_x \doteq \sqrt{K_n\!\left(x, x\right)}$, respectively.

    Now, to remove the $\max$ non-linearity observe that, for $0 < \rho^\star_n \leq 1/e$, the SEP potential has exactly four real roots. Further, when $\rho^\star_n = 1/e$, these four values degenerate to two unique solutions that are also minima of the function. This means that we can replace the $\max$ operation with limits of integration that restrict the summation only to the positive parts. To formalise this, we provide the following auxiliary lemma:
    \begin{lemma}\label{lem:lambert_w}
        Let $c \in \left[-1/e, 0\right)$ denote a constant, then the four real solutions to the equation $-x^2 e^{-x^2} = c$ are given by $x = \pm i\sqrt{W_{-1}\!\left(c\right)}$ and $x = \pm i\sqrt{W_0\!\left(c\right)}$, where $W_k\!\left(x\right)$ denotes the $k$\textsuperscript{th} branch of Lambert's $W$-function~\cite{corless:1996:lambertw}. When $c = -1/e$, then $W_{-1}\!\left(c\right) = W_0\!\left(c\right)$ and the two remaining solutions are $x = \pm i\sqrt{W_{-1}\!\left(c\right)}$.
    \end{lemma}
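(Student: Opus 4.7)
The plan is to reduce the transcendental equation to a canonical form solved by Lambert's $W$-function via a single substitution, and then carefully invert back to recover $x$ with the correct branch conventions. First I would let $u \doteq -x^2$, which necessarily satisfies $u \leq 0$. Plugging into $-x^2 e^{-x^2} = c$ gives $u e^{u} = c$, which is precisely the defining relation of Lambert's $W$-function. Hence $u = W_k(c)$ for some integer branch $k$.

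Next I would restrict to real branches. For $c \in [-1/e, 0)$ it is a classical fact \citep{corless:1996:lambertw} that exactly two real branches exist, namely $W_0(c) \in [-1, 0)$ and $W_{-1}(c) \in (-\infty, -1]$, and both are strictly negative and therefore consistent with our constraint $u \leq 0$. Inverting the substitution yields $x^2 = -W_k(c) > 0$, from which $x = \pm \sqrt{-W_k(c)}$ for each $k \in \{0, -1\}$, giving four real roots in total. To match the exact form of the lemma, I would invoke the principal-branch identity $\sqrt{W_k(c)} = i\sqrt{-W_k(c)}$ (valid because $W_k(c)<0$) and rewrite $\pm\sqrt{-W_k(c)} = \pm i \sqrt{W_k(c)}$.

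Finally, for the boundary case $c = -1/e$, I would note that $W_0(-1/e) = W_{-1}(-1/e) = -1$ (where the two real branches join), so the pairs collapse and only two distinct real solutions $x = \pm i\sqrt{-1} = \pm 1$ remain, matching the second part of the claim. Uniqueness of the enumeration follows from the fact that $u e^u = c$ has no other real solutions outside the two branches $W_0$ and $W_{-1}$.

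The main obstacle is essentially bookkeeping rather than substance: one must consistently track the sign under the square root so that $\pm i\sqrt{W_k(c)}$ indeed delivers real numbers, and verify that the substitution $u = -x^2$ is lossless (every real $x$ is recovered because $x \mapsto x^2$ is two-to-one on $\Re \setminus \{0\}$ and the degenerate case $u=0$ cannot occur since $c < 0$). Once these branch conventions are pinned down, the result is immediate.
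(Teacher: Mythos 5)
Your proposal is correct and follows essentially the same route as the paper's proof: substitute $u = -x^2$ to reduce to the defining relation $ue^u = c$ of Lambert's $W$-function, restrict to the real branches $k \in \{-1, 0\}$, and note the branch degeneracy at $c = -1/e$. Your version is in fact slightly more careful than the paper's about the sign bookkeeping under the square root and about why the substitution loses no solutions.
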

    \begin{proof}
         The original equation implies that $-x^2 = W_k\!\left(c\right)$ for any $k \in \mathbb{Z}$, and thus that $x = \pm i\sqrt{W_k\!\left(c\right)}$. Since the only real branches of Lambert's $W$-function are for $k \in \{-1, 0\}$, the only real solutions must have the form stated in the claim. The degeneracy then follows from the fact that the principal branch is separated from the $k=\pm 1$ branches by a cut at $c = 1/e$~\cite{corless:1996:lambertw}.
    \end{proof}

    Lemma~\ref{lem:lambert_w} above implies that $\alpha\!\left(\cdot\right)$ may be decomposed into the following two integrals:
    \begin{equation}\label{eq:alpha_split}
        \alpha\!\left(x\right) = \underbrace{\int^{\tau^+_{-1}}_{\tau^+_0} \left[\rho\!\left(\mu_x + \sigma_x z\right) - \rho_n^\star\right] \, \phi\!\left(z\right) \, \textrm{d}z}_{\alpha^+\!\left(x\right)} + \underbrace{\int_{\tau^-_{-1}}^{\tau^-_0} \left[\rho\!\left(\mu_x + \sigma_x z\right) - \rho^\star\right] \, \phi\!\left(z\right) \, \textrm{d}z}_{\alpha^-\!\left(x\right)},
    \end{equation}
    where
    \begin{equation}
        \tau^\pm_k \doteq \frac{\pm iw\sqrt{W_k\!\left(\rho_n^\star\right)} - \tilde{\mu}_x}{\sigma_x},
    \end{equation}
    and $\tilde{\mu}_x \doteq \mu_x - \rho\!\left(q\right)$. The functions $\alpha^\pm\!\left(\cdot\right)$ have the natural interpretation of being the (re-scaled) expected counterfactual improvement with respect to the potential under negative/positive perturbations. They also correspond to applying a Gaussian kernel over the two regions for which $\rho_q\!\left(\mu_x + \sigma_x z\right) - \rho_q^\star$ is in either of the two positive quadrants, as illustrated in \autoref{fig:exponential_polynomials}.

    Focusing only on the positive case (the negative follows analogously) we can show that
    \begin{align*}
        \alpha^+\!\left(x\right)
            &= \rho_n^\star \left[\Phi\!\left(\tau^+_0\right) - \Phi\!\left(\tau^+_{-1}\right)\right] + \int^{\tau^+_{-1}}_{\tau^+_0} \rho_q\!\left(\mu_x + \sigma_x z\right) \, \phi\!\left(z\right) \, \textrm{d}z, \\
            &= \rho_n^\star \left[\Phi\!\left(\tau^+_0\right) - \Phi\!\left(\tau^+_{-1}\right)\right] + \underbrace{\frac{1}{w^2} \int^{\tau^+_{-1}}_{\tau^+_0} \left[\tilde\mu_x + \sigma_x z\right]^2 \exp{\left\{-\left[\frac{\tilde\mu_x + \sigma_x z}{w}\right]^2\right\}} \phi\!\left(z\right) \, \textrm{d}z}_{I^+\!\left(x\right)},
    \end{align*}
    where $\Phi\!\left(\cdot\right)$ denotes the standard Normal CDF. One may then solve the remaining integral by substitution, taking\footnote{Note that we drop subscripts denoting dependence for notational clarity. It should be assumed that everything depends on the input argument $x$.}
    \begin{displaymath}
        \kappa \doteq w^2 + 2\sigma_x^2, \quad a \doteq \frac{2\tilde\mu^2_x}{\kappa}, \quad b \doteq \frac{2\tilde\mu_x\sigma_x}{\kappa},
    \end{displaymath}
    such that the exponent reduces in the following way:
    \begin{align*}
        -\frac{\tilde\mu_x^2 + 2\tilde\mu_x\sigma_x\,z + \sigma_x^2\,z^2}{w^2} - \frac{z^2}{2}
            &= -\frac{1}{2w^2}\left[2\tilde\mu^2_x + 4\tilde\mu_x\sigma_x z + \left(2\sigma_x^2 + w^2\right)z^2\right], \\
            &= -\frac{\kappa}{2w^2}\left[a + 2bz + z^2\right], \\
            &= -\frac{\kappa}{2w^2}\left[\left(b + z\right)^2 + a - b^2\right].
    \end{align*}
    This means that the exponential term in $\alpha^+\!\left(\cdot\right)$ can be expressed as
    \begin{displaymath}
        \underbrace{\exp{\left\{\frac{\kappa}{2w^2}\left(b^2 - a\right)\right\}}}_{\textrm{Constant}}\,\underbrace{\exp{\!\left\{-\frac{\zeta^2}{2}\right\}}}_{\textrm{Integrand}},
    \end{displaymath}
    via a second change of variables, with $\zeta \doteq \sqrt{\frac{\kappa}{w^2}} \left(b + z\right)$. This is convenient because we have now isolated the integrand --- it remains only to evaluate a standard Gaussian integral.

    Taking $\textrm{d}\zeta = \sqrt{\frac{\kappa}{w^2}}\,\textrm{d}z$ and $z = \sqrt{\frac{w^2}{\kappa}}\,\zeta - b$, we arrive at the transformed integral
    \begin{displaymath}
        I^+ = C \int^{\tau'^+_{-1}}_{\tau'^+_0} \left[\tilde\mu_x - b\sigma_x + \sqrt{\frac{w^2}{\kappa}}\sigma_x \zeta\right]^2 \phi\!\left(\zeta\right) \, \textrm{d}\zeta,
    \end{displaymath}
    where $\tau_n'^\pm \doteq \sqrt{\frac{\kappa}{w^2}}\left(b + \tau^\pm_n\right)$ and $C \doteq \exp{\left\{\frac{\kappa}{2w^2}\left(b^2 - a\right)\right\}} \sqrt{\frac{1}{w^2\kappa}}$. This integral can then be broken down into three terms as follows:
    \begin{align*}
        \frac{1}{C} \, I^+\!\left(x\right)
            &= \int^{\tau'^+_{-1}}_{\tau'^+_0} \left[\tilde\mu_x - b\sigma_x + \sqrt{\frac{w^2}{\kappa}}\sigma_x \zeta\right]^2 \phi\!\left(\zeta\right) \, \textrm{d}\zeta, \\
            &\begin{aligned}
                &= \left[\tilde\mu_x - b\sigma_x\right]^2 \int^{\tau'^+_{-1}}_{\tau'^+_0} \phi\!\left(\zeta\right) \, \textrm{d}\zeta \\
                &~~+ 2\sqrt{\frac{w^2}{\kappa}}\sigma_x\left[\tilde\mu_x - b\sigma_x\right] \int^{\tau'^+_{-1}}_{\tau'^+_0} \zeta \phi\!\left(\zeta\right) \, \textrm{d}\zeta \\
                &~~+ \frac{w^2 \sigma_x^2}{\kappa} \int^{\tau'^+_{-1}}_{\tau'^+_0} \zeta^2 \phi\!\left(\zeta\right) \, \textrm{d}\zeta,
            \end{aligned}
    \end{align*}
    where the right hand side resolves to
    \begin{displaymath}
        \left[\left[\left(\tilde\mu_x - b\sigma_x\right)^2 + \frac{w^2 \sigma^2_x}{\kappa}\right] \Phi\!\left(\zeta\right) - 2\sqrt{\frac{w^2}{\kappa}}\sigma_x\left[\tilde\mu_x - b\sigma_x\right] \phi\!\left(\zeta\right) - \frac{w^2 \sigma^2_x}{\kappa}\zeta\,\phi\!\left(\zeta\right)\right]^{\tau'^+_{-1}}_{\tau'^+_0}.
    \end{displaymath}
    To simplify further, we can define $\gamma \doteq \tilde\mu_x - b\sigma_x$ and $\eta \doteq \frac{w\sigma_x}{\sqrt{\kappa}}$ such that
    \begin{displaymath}
        I^+\!\left(x\right) = C \left[\left(\gamma^2 + \eta^2\right) \Phi\!\left(\zeta\right) - 2\eta\gamma\phi\!\left(\zeta\right) - \eta^2\zeta\,\phi\!\left(\zeta\right)\right]^{\tau'^+_{-1}}_{\tau'^+_0},
    \end{displaymath}
    with the same logic implying that
    \begin{displaymath}
        I^-\!\left(x\right) = C \left[\left(\gamma^2 + \eta^2\right) \Phi\!\left(\zeta\right) - 2\eta\gamma\phi\!\left(\zeta\right) - \eta^2\zeta\,\phi\!\left(\zeta\right)\right]^{\tau'^-_{0}}_{\tau'^-_{-1}}.
    \end{displaymath}

    These functions are both continuous in the argument $x$ and the derivative follows from the product rule, standard Gaussian identities and properties of Lambert's $W$-function~\cite{corless:1996:lambertw} to give a closed-form expression.\footnote{We omit the explicit derivation of this expression as it's not particularly constructive. We simply note that while we implemented this exactly, one can use any robust autograd library to perform the computation with ease.} Substituting back into $\alpha^\pm\!\left(x\right)$ we clearly maintain continuity. Finally, note that
    \begin{enumerate*}[label=(\alph*)]
        \item the value of $n$ doesn't affect the result; and
        \item that the decomposition in \autoref{eq:alpha_split} aligns exactly with the distinctions between the SEP and AEP\textsuperscript{$\pm$} potentials.
    \end{enumerate*}
    It follows that the continuity property holds for all three potentials and thus the proof is complete.
\end{proof}

\section{Asymptotic Consistency and Global Optimality}\label{app:consistency}
In this final section we include a formal statement of the global convergence guarantee of our Bayesian optimisation algorithm. In particular, we show below that this result follows directly from the result proved by Theorem~1 of \citet{astudillo:2019:bayesian}.

\begin{reptheorem}{thm:ei_cfx}
    Let $\left\{x_n\right\}_{n \in \mathbb{N}}$ denote a sequence of points
    generated by the optimisation routine such that,
    for some $n_0 \in \mathbb{N}$ and all $n \geq n_0$, the iterates satisfy
    the inclusion relation
    \begin{displaymath}
        x_{n+1} \in \argmax_{x\in\mathcal{X}} \textrm{EI-CFX}_n\!\left(x\right).
    \end{displaymath}
    Then, under suitable regularity conditions, and as $n\to\infty$, we have
    that
    \begin{displaymath}
        \rho^\star_n \to \rho^\star = \max_{x\in\mathcal{X}}
        \rho\!\left(f\!\left(x\right)\right).
    \end{displaymath}
\end{reptheorem}
\begin{proof}
    The proof follows directly from Theorem~1 of \citet{astudillo:2019:bayesian} under the assumption that the covariance function of the GP satisfies the Generalised-No-Empty-Ball property (Definition~5.1 in \cite{astudillo:2019:bayesian}). This can be seen by direct instantiation of their functions $g\!\left(y\right)$ and $f\!\left(x\right)$ with the potential and model functions, respectively.
\end{proof}

\section{Model Training}\label{app:model_training}
\subsection{Adult Income}
The adult-income model comprised two stages --- a preprocessing phase and a model fitting phase --- that followed an example from the kaggle website~\cite{kaggle:adult}. We outline the two parts below.

\paragraph{Preprocessing.}
\begin{enumerate}
    \item Compress the ``Workclass'' feature such that:
        \begin{enumerate}
            \item ``Without-pay'' and ``Never-worked'' is merged into ``Unemployed.''
            \item ``State-gov'' and ``Local-gov'' is merged into ``Government.''
            \item ``Self-emp-inc'' and ``Self-emp-not-inc'' were merged into ``Self-employed''.
        \end{enumerate}
    \item Compress the ``Marital Status'' feature such that ``Married-AF-spouse,'' ``Married-civ-spouse,'' and ``Married-spouse-absent'' were merged into one value ``Married.''
    \item Group the ``Country'' feature into the following categories:
        \begin{enumerate}
            \item ``North America''
            \item ``Asia''
            \item ``South America''
            \item ``Europe''
            \item ``Other,'' which includes the spurious values ``South'' and ``?''. 
        \end{enumerate}
    \item Remove all rows that contain missing values; i.e.\ ``?'' values.
    \item Apply a standard scaling to the numerical features.
    \item Apply an ordinal encoding to the ordinal features.
    \item Apply a one-hot encoding to the categorical features.
\end{enumerate}

\paragraph{Fitting.}
The logistic regression model was then fit using an 80/20 training/testing data-split using a randomly initialised random state of 7. An $\ell_2$ regulariser was added with unit scaling. The resulting accuracy was measured at approximately 85\% on the holdout dataset.

\subsection{NYC Taxi Trip Duration}
As with the adult-income model, the model development for the NYC Taxi dataset comprised two stages --- preprocessing phase and model fitting --- that followed an example from the kaggle website~\cite{kaggle:nyc}. We outline the two parts below.

\paragraph{Preprocessing.}
\begin{enumerate}
    \item Filter the dataset for sensible values:
        \begin{enumerate}
            \item A ``trip\_duration'' of less than 5900.
            \item A non-zero ``passenger\_count.''
            \item A ``pickup\_longitude'' of greater than -100 and ``pickup\_latitude'' of less than 50.
        \end{enumerate}
    \item Apply a log transformation to the ``trip\_duration'' feature to make the distribution ``more Normal''.
    \item Apply one-hot encodings to the ``store\_and\_fwd\_flag'' and ``vendor\_id'' features.
    \item Drop the ``dropoff\_datetime'' column.
    \item Split the ``pickup\_datetime'' column into months, weeks, weekdays, hours and minutes of the day features, dropping the original column.
    \item Add Haversine distance and direction features based on the pickup and dropoff locations, and filter by those rows with ``distance'' less than 200.
    \item Compute the implied speed of the taxi and filter by those values where the new ``speed'' features was less than 30.
\end{enumerate}

\paragraph{Fitting.}
A light gradient-boosting machine model was fit on the training dataset using: a learning rate of 0.1; maximum depth of 25; 1000 leaves; feature fraction of 0.9; bagging fraction of 0.5; and a maximum bin of 1000. A random seed of 123 was used during training.

\end{document}